\documentclass{article}

 \usepackage[preprint]{neurips_2026}

\usepackage[utf8]{inputenc} 
\usepackage[T1]{fontenc}    
\usepackage{hyperref}       
\usepackage{url}            
\usepackage{booktabs}       
\usepackage{amsfonts}       
\usepackage{nicefrac}       
\usepackage{microtype}      
\usepackage{xcolor}         

\usepackage{algorithmic}
\usepackage{algorithm}
\usepackage{mathtools}
\usepackage{wrapfig}
\usepackage{subcaption}
\usepackage{enumitem}

\usepackage{amsmath,amsfonts,bm}

\def\eqref#1{equation~\ref{#1}}

\def\1{\bm{1}}

\def\vtheta{{\bm{\theta}}}
\def\vphi{{\bm{\phi}}}
\def\va{{\bm{a}}}

\def\vg{{\bm{g}}}

\def\vJ{{\bm{J}}}

\def\vp{{\bm{p}}}
\def\vq{{\bm{q}}}

\def\vs{{\bm{s}}}

\def\vu{{\bm{u}}}
\def\vv{{\bm{v}}}

\DeclareMathAlphabet{\mathsfit}{\encodingdefault}{\sfdefault}{m}{sl}
\SetMathAlphabet{\mathsfit}{bold}{\encodingdefault}{\sfdefault}{bx}{n}

\def\sA{{\mathcal{A}}}

\def\sD{{\mathcal{D}}}

\def\sL{{\mathcal{L}}}

\def\sS{{\mathcal{S}}}
\def\sT{{\mathcal{T}}}

\newcommand{\R}{\mathbb{R}}

\DeclareMathOperator*{\argmax}{arg\,max}
\DeclareMathOperator*{\argmin}{arg\,min}

\newcommand{\KL}{\textsc{kl}}

\usepackage{soul}
\usepackage[dvipsnames]{xcolor}

\usepackage{amsmath}
\usepackage{amssymb}
\usepackage{mathtools}
\usepackage{amsthm}
\usepackage{adjustbox}

\newtheorem{theorem}{Theorem}[section]

\newtheorem{lemma}[theorem]{Lemma}

\newtheorem{definition}[theorem]{Definition}

\usepackage[toc,page,header]{appendix}
\usepackage{minitoc}

\title{Distributionally Robust Multi-Task Reinforcement Learning via Adaptive Task Sampling}

\author{%
  Nicholas E.~Corrado \\
 Computer Sciences Department \\
 University of Wisconsin -- Madison \\
 \texttt{ncorrado@wisc.edu}
  \And
 Wenyuan Huang \\
 Computer Sciences Department \\
 University of Wisconsin -- Madison \\
 \texttt{whuang369@wisc.edu}
  \And
 Josiah P.~Hanna \\
 Computer Sciences Department \\
 University of Wisconsin -- Madison \\
 \texttt{jphanna@cs.wisc.edu}
}

\begin{document}

\doparttoc 
\faketableofcontents 

\maketitle

\begin{abstract}
Multi-task reinforcement learning (MTRL) aims to train a single agent to efficiently optimize performance across multiple tasks simultaneously.
However, jointly optimizing all tasks often yields imbalanced learning: agents quickly solve easy tasks but learn slowly on harder ones.
While prior work primarily attributes this imbalance to conflicting task gradients and proposes gradient manipulation or specialized architectures to address it, we instead focus on a distinct and under-explored challenge: \emph{imbalanced data allocation}.
Standard MTRL allocates an equal number of environment interactions to each task, which over-allocates data to easy tasks that require relatively few interactions to solve and under-allocates data to hard tasks that require substantially more experience to solve.
To address this challenge, we introduce \textbf{D}istributionally \textbf{R}obust \textbf{A}daptive \textbf{T}ask \textbf{S}ampling (DRATS), an algorithm that adaptively prioritizes sampling tasks furthest from being solved.
We derive DRATS by formalizing MTRL as a feasibility problem from which we derive a minimax objective for minimizing the worst-case return gap, the difference between a desired target return and the agent's return on a task. 
In benchmarks like MetaWorld-MT10 and MT50, DRATS improves data efficiency and increases worst-task performance compared to existing task sampling algorithms.
\end{abstract}

\section{Introduction}

Multi-task reinforcement learning (RL) promises improved data efficiency by training a single 
agent to simultaneously solve many tasks.
In principle, training on many structurally similar tasks enables knowledge transfer, reducing number of interactions required to achieve strong performance~\citep{brunskill2013sample}.
In practice, however, jointly optimizing multiple tasks leads to imbalanced learning: agents 
quickly solve easy tasks but learn slowly on harder ones.
Prior work primarily attributes this imbalance to conflicting task gradients~\citep{yu2020gradient, 
liu2021conflict, navon2022multi, liu2023famo}, motivating methods that manipulate gradients, 
reweight the learning objective, or use specialized architectures to facilitate transfer while 
mitigating conflict~\citep{yang2020multi, sodhani2021multi, sun2022paco, hendawy2023multi}.
In this work, we instead focus on a distinct and under-explored cause of imbalanced multi-task learning: 
\emph{imbalanced data allocation}.

Standard multi-task training allocates equal interaction budgets to all tasks.
When tasks vary in difficulty, this allocation is inefficient: hard tasks require more 
experience to solve than easy ones~\citep{cho2024hard}.
%
Uniform task interaction therefore over-allocates data to easy tasks and under-allocates it to hard 
ones, ultimately causing imbalanced learning.
Curriculum learning (CL) methods~\citep{narvekar2020curriculum} adapt the task-sampling 
distribution over time, but typically prioritize easy tasks first assuming that positive transfer will facilitate learning on harder tasks.
In settings where tasks do not exhibit significant positive transfer, this strategy can exacerbate data imbalance by over-allocating data to easy tasks that need it least while neglecting harder tasks that need it most.

These observations suggest that a data-efficient multi-task learner should 
prioritize interacting with tasks \emph{furthest from being solved}.
Distributionally robust optimization (DRO) methods take a similar perspective, optimizing worst-task performance by up-weighting the learning objective for tasks with the largest loss~\citep{sagawa2019distributionally, oren2019distributionally}.
While DRO has been studied extensively in supervised learning~\citep{corrado2025automixalign, xie2024doremi, he2024robust, albalak2023efficient, xia2023sheared}, extensions to multi-task RL are relatively under-explored~\citep{mandal2025distributionally, panaganti2026group}.
Moreover, most existing DRO methods prioritize tasks by reweighting the learning objective while sampling tasks uniformly and thus still suffer from data imbalance.

In this work, we introduce a DRO-inspired multi-task learning algorithm that balances performance across all tasks in a data efficient manner. We first formalize multi-task RL as a feasibility problem in which the agent must reach a target return in each task and then derive a minimax optimization that minimizes the maximum \textit{return gap}, the difference between a desired target return on a task and the agent’s current return. Our algorithm, \textbf{D}istributionally \textbf{R}obust \textbf{A}daptive \textbf{T}ask \textbf{S}ampling (DRATS), adapts the task-sampling distribution to prioritize interacting with tasks in which the agent is furthest from its target returns. Because DRATS only modifies data collection, it can be combined with existing multi-task learning methods, such as those that use specialized architectures (\textit{e.g.},~\citep{hendawy2023multi}).
Empirically, DRATS yields more data efficient learning, higher asymptotic performance, and higher worst-task return on MetaWorld-MT10 and MT50 benchmarks~\citep{yu2020meta, mclean2025meta} as well as a multi-task version of the standard 6-task MuJoCo benchmark~\citep{todorov2012mujoco}.
In summary, our contributions are:
\begin{enumerate}
\item We introduce DRATS, a multi-task RL algorithm that adaptively prioritizes sampling tasks with the largest return gap. Using standard online learning techniques, 
    we prove that DRATS converges to within $\epsilon$ error of the optimal worst-task return 
    gap at a standard $O(1/\sqrt{T})$ rate.
    \item We empirically demonstrate that DRATS achieves greater data efficiency, higher aggregate 
return, and higher worst-task return than existing curriculum learning baselines.
    \item We empirically demonstrate that DRATS is compatible with multi-task network architectures and improves data efficiency whether tasks have positive transfer or no transfer at all.
\end{enumerate}

\section{Related Work}

\paragraph{Curriculum Learning.}
Curriculum learning (CL)~\citep{bengio2009curriculum, narvekar2020curriculum} improves data efficiency by controlling the order in which an agent samples training data or tasks. Most CL methods target \textit{single-task} settings, decomposing a single complex task into a sequence of simpler sub-tasks~\citep{andrychowicz2017hindsight, florensa2018automatic, ren2019exploration, pong2019skew, eysenbach2020c, zhang2021c, shah2021rapid, chane2021goal, zhang2020automatic, jiang2021prioritized, huang2022curriculum, cho2023outcome, liu2024single}. 
We instead consider \textit{multi-task} settings where tasks are not necessarily sub-problems of a single task and thus may not exhibit positive transfer or may even conflict with each other.
Existing multi-task CL methods often adapt the task distribution heuristically using return or success rate thresholds~\citep{wang2019paired, wang2020enhanced, cho2024hard} or by prioritizing tasks  with the most learning progress~\citep{matiisen2019teacher, mysore2019reward, colas2019curious, portelas2020teacher, kanitscheider2021multi}. 
Most prioritize easy tasks first, assuming positive transfer will accelerate learning on harder ones, which is not necessarily the most data-efficient approach when transfer is weak or tasks conflict.
Moreover, balanced task performance is not a direct objective of these methods: improved worst-task performance is a hoped-for byproduct of transfer, whereas in DRATS it is the optimization target.
The most closely related CL method is SMT~\citep{cho2024hard}, which prioritizes tasks with the smallest returns via manually-specified thresholds. DRATS has a similar goal but automatically adapts the task-sampling distribution via minimax optimization and has convergence guarantees that SMT lacks.


\paragraph{Distributionally Robust Optimization.} 
DRATS is a
distributionally robust optimization (DRO) method. 
In supervised learning, DRO methods minimize the worst-case loss across tasks and are widely used to determine the most effective way to combine data from different sources or tasks~\citep{oren2019distributionally, 
sagawa2019distributionally, michel2021balancing, albalak2023efficient, xia2023sheared, 
xie2024doremi, he2024robust}.
To our knowledge, only two works study DRO for RL---both in RLHF settings with 
LLMs~\citep{mandal2025distributionally, panaganti2026group}---making DRATS the first application 
of DRO to multi-task RL in continuous control settings.
Nearly all DRO methods in both supervised and RL settings reweight the training objective to up-weight high-loss tasks while continuing to sample tasks uniformly, which means data imbalance is still a source of data inefficiency.\footnote{We empirically validate this statement in Appendix~\ref{app:ablations_reweight}.}
In contrast, DRATS adapts the task sampling distribution.
Our work is therefore most closely related to \citet{corrado2025automixalign} and 
\citet{albalak2023efficient}, which also adapt the sampling distribution rather than the objective, 
but do so for LLM alignment and pretraining in supervised settings, respectively.
%

\paragraph{Gradient Manipulation.} 
A core challenge in multi-task learning is \textit{gradient conflict}: gradients from 
different tasks may point in opposing directions, so updating in the direction of the average 
gradient can impede learning on some tasks.
Several methods mitigate conflict by finding a new gradient direction that simultaneously 
improves all tasks~\citep{desideri2009multiple, yu2020gradient, liu2021conflict} or balances 
the rate of progress across tasks~\citep{liu2021towards, navon2022multi, liu2023famo}.
DRATS is orthogonal to these methods: it only changes how much data is used to compute 
task gradients, leaving the expected gradient direction and magnitude unchanged.
Gradient manipulation and DRATS therefore target distinct sources of 
imbalanced learning.

\paragraph{Network Architectures.} Multi-task network architectures enable task transfer while also mitigating conflict.
CARE \citep{sodhani2021multi} uses contextual attention to dynamically weight shared representations. PaCo \citep{sun2022paco} decomposes the model into shared and task-specific parameters.
Soft Modularization \citep{yang2020multi} and MOORE \citep{hendawy2023multi} use mixture-of-experts to encode task representations.
\citet{mclean2025multi} find that simply scaling the number of parameters in multi-task architectures significantly improves performance. 
DRATS only adapts the amount of data collect for each task and can be applied on top of multi-task network architectures.

\section{Preliminaries: Multi-Task Reinforcement Learning}
We formalize an RL environment as a finite horizon Markov decision process (MDP)~\citep{puterman2014markov} $(\sS, \sA, p, r, d^0, \gamma)$ with state space $\sS$, action space $\sA$, transition dynamics $p : \sS\times\sA\times\sS\to[0,1]$,  reward function $r : \sS\times\sA\to\R$, initial state distribution $d_0$, and discount factor $\gamma \in [0, 1)$.
In the multi-task RL setting, we work with a collection of $k$ environments (or \textit{tasks}) indexed by $i \in \sT = \{1,\dots,k\}$. 
For simplicity notation, we assume tasks share the same state and action spaces but can have different initial state distributions $d^0_i$, transition dynamics $p_i$, and reward functions $r_i$.  
We consider a parameterized family of stochastic, task-conditioned policies
$\pi_\vtheta(\va | \vs, i)$, 
denoting
the probability of selecting action $\va$ in state $\vs$ in task $i$. We condition policies on a task by appending a one-hot encoded task identifier to the state.
We denote the expected discounted return of $\pi_\vtheta$ in a specific task $i$ as $J_i(\vtheta) = \mathbb E_{\tau \sim \pi_\vtheta}\left[\sum_{t=0}^{H} \gamma^t r_i(\vs_t,\va_t) \right]$,
where $\tau$ is a trajectory sampled from $\pi_\vtheta$ and $H$ is a random variable denoting the episode horizon. 
The standard MTRL objective is to learn a single policy that maximizes the expected discounted return averaged across tasks, $J(\vtheta) = \frac{1}{k}\sum_{i=1}^k J_i(\vtheta)$.

\section{Distributionally Robust Adaptive Task Sampling (DRATS)}
\label{sec:feasibility}

Maximizing the expected return averaged across tasks can yield imbalanced task performance, since an agent can achieve a high aggregate return by excelling on easy tasks while performing poorly on others.
In this section, we instead formalize multi-task RL as a feasibility problem in which the agent must reach a 
target return in each task, and show that this perspective yields an adaptive 
task-sampling algorithm that prioritizes tasks based on the gap between the agent's current 
return and target return.

\subsection{Multi-Task RL as a Feasibility Problem}

Suppose we have a vector of \textit{reference returns} $\vJ^\text{ref} = (J_1^{\text{ref}}, \dots, J_k^{\text{ref}})$ representing a desirable return in each task. For now, we treat $\vJ^\text{ref}$ as known and focus on deriving a new optimization problem; we discuss how to determine $\vJ^\text{ref}$ later.
We formalize the multi-task objective as a feasibility problem:
\begin{equation}
\text{Find } \vtheta \quad \text{s.t.} \quad J_i(\vtheta) \ge J_i^\text{ref}, \quad \forall i \in [k].
\label{eq:feasibility_problem}
\end{equation}
We transform Eq.~\ref{eq:feasibility_problem} into a minimization over constraint violations by 
defining the \textit{return gap} $g_i(\vtheta) \coloneqq \max\left\{ J_i^{\text{ref}} - J_i(\vtheta), 0 \right\}$,
which is positive when the constraint is violated and zero once it is satisfied.\footnote{For 
any task whose constraint is violated, minimizing the return gap is equivalent to maximizing 
the return.}
A policy solves all tasks if and only if $g_i(\vtheta) = 0$ for all $i$.
When constraints cannot be simultaneously satisfied---due to limited model capacity or 
conflicting task objectives---we assume equal task importance and spread violations across 
tasks as evenly as possible.
This goal is naturally captured by an exact penalty formulation which minimizes the maximum violation:
\begin{equation}
\label{eq:feas_minimax}
\min_{\vtheta}~\max_{i\in[k]} g_i(\vtheta).
\end{equation}
We can rewrite the inner maximization as a maximization over distributions of tasks:
\begin{equation}
\label{eq:feas_minimax_sampling}
\min_{\vtheta}~\max_{\vq \in \Delta^k} \mathbb{E}_{i \sim \vq} \left[ g_i(\vtheta) \right].
\end{equation}
where $\Delta^k = \left\{ \vq \in \mathbb{R}^k \mid \sum_{i=1}^k q_i = 1, q_i \ge 0 \right\}$ denotes the $k$-dimensional probability simplex. 
Eq.~\ref{eq:feas_minimax_sampling} recasts multi-task RL as an active data collection problem: $\vq$ defines the probability of interacting with each task, and the inner maximization adapts $\vq$ to 
prioritize tasks with the largest return gaps.
Solving the inner maximization over the full simplex $\Delta^k$ leads to a solution $\vq^*$ that concentrates all probability on the task with the largest gap. This solution has three drawbacks:
\textbf{(1)~Non-smoothness.} Since the most-violated task shifts throughout optimization, 
Eq.~\ref{eq:feas_minimax_sampling} is generally non-smooth, a known source of instability in 
minimax optimization~\citep{Razaviyayn2020NonconvexMO}.
\textbf{(2)~Data inefficiency.} Optimizing only the most-violated task at each update is 
greedy coordinate descent: it reduces that task's return gap but provides no guarantee of 
progress on others, which is inefficient when tasks share structure.
\textbf{(3)~Return gap estimation.} Return gaps $g_i(\vtheta)$ must be estimated online. 
If a task's probability $q_i$ becomes too small, its gap estimate will be high variance.
In the next section, we address these drawbacks by restricting the space of possible task distributions.

\subsection{KL-Regularized Task Sampling}
\label{sec:kl_regularization}

To address these challenges, we restrict the task distribution $\vq$ to lie within a $\mathrm{KL}$ ball of radius $\varepsilon$ centered at a base distribution $\vp_0$ which we take to be the uniform distribution, yielding the following constrained optimization problem:
\begin{equation}
\label{eq:kl_constrained_optimization}
\min_\vtheta\max_{\vq \in \mathcal{Q}} \mathbb{E}_{i \sim \vq} [g_i(\vtheta)], 
\qquad \text{where} \qquad 
\mathcal{Q} = \left\{
\vq\in\Delta^k \mid \mathrm{KL}(\vq\|\vp_0)\le \varepsilon
\right\}.
\end{equation}
The $\mathrm{KL}$ constraint prevents $\vq$ from collapsing all probability onto a single task.
We derive a closed-form solution to the inner maximization of Eq.~\ref{eq:kl_constrained_optimization} following a procedure similar to~\citet{peters2010relative, abdolmaleki2018maximum, peng2019advantage}. We provide a brief sketch here and a complete derivation in Appendix~\ref{app:derivation_softmax}.
We first form the unconstrained Lagrangian relaxation of Eq.~\ref{eq:kl_constrained_optimization}:
\begin{equation}
\label{eq:kl_penalty_optimization}
\min_\vtheta\max_{\vq \in \Delta^k} 
\left\{
\mathbb{E}_{i \sim \vq} [g_i(\vtheta)] 
- \frac{1}{\eta}\mathrm{KL}(\vq\|\vp_0)
\right\},
\end{equation}
where $\eta > 0$ is the inverse Lagrange multiplier. By strong duality, Eq.~\ref{eq:kl_constrained_optimization} and Eq.~\ref{eq:kl_penalty_optimization} have the same solution for an appropriate choice of $\eta$.
Differentiating the objective in Eq.~\ref{eq:kl_penalty_optimization} with respect to $\vq$, setting to zero, and solving for the optimal $\vq^*$ yields:
\begin{equation}
\label{eq:softmax_q}
q_i^*
=\frac{\exp\left(\eta g_i(\vtheta)\right)}{\sum_{j=1}^k\exp\left(\eta g_j(\vtheta)\right)}, \qquad \text{or equivalently,} \qquad \vq^* = \text{softmax}(\eta\vg(\vtheta))
\end{equation}
Here, $\eta$ acts as an inverse temperature controlling the sharpness of $\vq^*$: as 
$\eta \to \infty$, $\vq^*$ concentrates all mass on the task with the largest return gap, 
recovering the unregularized solution of Eq.~\ref{eq:feas_minimax_sampling}; as $\eta \to 0$, 
$\vq^*$ approaches the uniform distribution.
KL regularization  smooths the 
optimization by shifting the task distribution gradually rather than abruptly and maintains nonzero probability on 
all tasks (for finite $\eta$), enabling an agent to learn many tasks simultaneously and reliably estimate return gaps.
We optimize Eq.~\ref{eq:kl_penalty_optimization} using Algorithm~\ref{alg:dro}, interleaving 
gradient updates to $\vq$ and $\vtheta$ similar to~\citet{sagawa2019distributionally, 
xie2023doremi, corrado2025automixalign}.
Specifically, we fill the agent's buffer by sampling tasks from $\vq$, 
compute Monte Carlo return estimates for each task, and then update $\vq$ via mirror ascent~\citep{shalev2012online} on $\sL(\vtheta, \vq) = \mathbb{E}_{i \sim \vq} [g_i(\vtheta)] 
- \frac{1}{\eta}\mathrm{KL}(\vq\|\vp_0)$:
\begin{equation}
q_{t+1,i} = \frac{q_{t,i}\exp\left(\alpha[\nabla_\vq \sL(\vq_t)]_i\right)}
{\sum_{j=1}^k q_{t,j}\exp\left(\alpha[\nabla_\vq \sL(\vq_t)]_j\right)}
\qquad\quad
[\nabla_\vq \sL(\vq_t)]_i =
g_i(\vtheta)
-
\frac{1}{\eta}
\left(
\log\frac{q_{t,i}}{p_{0,i}}+1
\right)
\label{eq:mirror_ascent_update}
\end{equation}
where $\alpha$ is the step size. For completeness, we formally describe the mirror ascent update in Appendix~\ref{app:mirror_descent}.
Since sampling probabilities can approach zero even with KL regularization if $\eta$ is 
large, we project $\vq$ onto the simplex constrained to $[\varepsilon, 1]^k$ via a KL 
projection after each update (Line~\ref{alg:projection}).
%
Since returns across tasks may differ in scale, we also normalize return gaps to be in $[0, 1]$ (Line~\ref{alg:normalization}).


\paragraph{Choosing Reference Returns.}
Assuming tasks do not fundamentally conflict, $\vJ^\text{ref}$ should ideally denote the maximum achievable return in each task.\footnote{When tasks conflict and their maxima cannot be simultaneously achieved, the user may specify lower reference returns for some tasks to encode preferences over performance tradeoffs.}
Maximum returns are rarely known in advance, so we estimate them online as the maximum observed return.
If the agent is stuck in a low-return plateau, we note that the observed maximum return may underestimate the true maximum. 
When tasks have a notion of success, we resolve this issue by initializing $J_i^\text{ref} = r_\text{max} H_\text{max}$, where $r_\text{max}$ is the maximum per-step reward and $H_\text{max}$ is the maximum episode length, and then updating the reference to the maximum observed return once the agent's success rate exceeds, \textit{e.g.}, 0.5, at which point the agent is solving the task and the observed maximum is reliable.
Requiring an estimate of the maximum return is no stronger an assumption than those made by closely related baselines, which require user-specified return or success rate thresholds~\citep{cho2024hard, kanitscheider2021multi, wang2020enhanced}; a key distinction is that we can estimate ours \emph{online}.

\begin{algorithm}[t]
\caption{Distributionally Robust Adaptive Task Sampling (DRATS)}
\label{alg:dro}
\begin{algorithmic}[1]
\STATE \textbf{Inputs:} DRATS learning rate $\alpha \in [0,1]$, minimum sampling probability $\varepsilon$
\STATE \textbf{Outputs:} Trained model parameters $\vtheta$.
\STATE Initialize model parameters $\vtheta$, task distribution $\vq_1 = (q_1,\dots,q_k) \gets (\frac{1}{k},\dots,\frac{1}{k})$, buffer $\sD\gets \emptyset$
\FOR{iteration $t = 1,\dots,T$} 
\WHILE{$\sD$ is not full}
    \STATE Sample task $i \sim \vq_t$, collect a trajectory from task $i$, and add it to $\sD$.
\ENDWHILE
    \STATE Compute a Monte Carlo estimate of the return $\widehat J_i$ for each task $i\in [k]$
    \STATE Update reference returns $\vJ^\text{ref}$ (if applicable)
    \STATE Compute the normalized return gap $\widehat g_i = (J^\text{ref}_i - \widehat J_i)/(J^\text{ref}_i - J^\text{rand}_i)$ for each task $i\in[k]$.\label{alg:normalization}
    \STATE Update $\vq_t$ via exponentiated gradient ascent on $\sL(\vtheta, \vq) = \mathbb{E}_{i \sim \vq} [g_i(\vtheta)] 
- \frac{1}{\eta}\mathrm{KL}(\vq\|\vp_0)$: 
\begin{equation*}
[\nabla_\vq \sL(\vq_t)]_i
=
g_i(\vtheta)
-
\frac{1}{\eta}
\left(
\log\frac{q_{t,i}}{p_{0,i}}+1
\right) \qquad
q_{t+1,i} = \frac{q_{t,i}\exp\left(\alpha[\nabla_\vq \sL(\vq_t)]_i\right)}
{\sum_{j=1}^k q_{t,j}\exp\left(\alpha[\nabla_\vq \sL(\vq_t)]_j\right)}
\end{equation*}
    \STATE Enforce minimum sampling probability: $\vq_{t+1} \gets \argmin_{\vq \in \Delta^k,\, \vq \geq \varepsilon} \mathrm{KL}(\vq \| \vq_{t+1})$ \label{alg:projection}
    \STATE Update $\vtheta$ with an on-policy algorithm that tries to maximize $\mathbb{E}_{i \sim \vq}[g_i(\vtheta)] 
- \frac{1}{\eta}\mathrm{KL}(\vq\|\vp_0)$. \\
    \ENDFOR
\end{algorithmic}
\end{algorithm}

\subsection{Convergence Analysis}
\label{sec:convergence}

We now show that DRATS converges to the minimax optimal worst-task return gap 
at a rate of $O(1/\sqrt{T})$ under standard assumptions. Our analysis uses standard online learning proof techniques~\citep{duchi2023lecture, shalev2012online} and is similar to the analysis of \citet{corrado2025automixalign, sagawa2019distributionally}. 
We model the optimization of 
Eq.~\ref{eq:kl_penalty_optimization} as a zero-sum game between two players, 
the $\vtheta$-player and the $\vq$-player. For each round $t$, the $\vq$-player chooses $\vq_t \in \Delta^k$, the $\vtheta$-player chooses $\vtheta_t \in \Theta$, and then the $\vq$-player observes the vector of task gaps $g(\vtheta_t) = (g_1(\vtheta_t), \ldots, g_k(\vtheta_t))$ and runs mirror ascent on $\sL$. For the $\vtheta$-player, we require 
the following regret guarantee.
\begin{definition}
\label{def:low_regret}The $\vtheta$-player is $C$-low-regret if there exists $C > 0$ 
such that for any sequence $\vq_1, \ldots, \vq_T \in \Delta^k$,
\begin{equation}
\sum\nolimits_{t=1}^T \langle \vq_t, g(\vtheta_t)\rangle
\leq
\min_{\vtheta \in \Theta} \sum\nolimits_{t=1}^T \langle \vq_t, g(\vtheta)\rangle
+ C\sqrt{T}.
\end{equation}
\end{definition}
This assumption is satisfied by any no-regret online learning algorithm such as online 
gradient descent \citep{shalev2012online}---which in the context of 
RL is equivalent to REINFORCE~\citep{williams1992simple}.\footnote{Most of our experiments use PPO 
\citep{schulman2017proximal} since it is more data efficient than REINFORCE.}
Under this assumption, DRATS achieves the following convergence guarantee:
\begin{theorem}[Convergence of DRATS]
\label{thm:convergence}
Suppose $g_i(\vtheta) \in [0, M]$ for all $i \in [k]$ and $\vtheta \in \Theta$, and that the $\vtheta$-player is $C$-low-regret (Definition~\ref{def:low_regret}). For any target approximation error $\epsilon > 0$ w.r.t. 
$\min_{\vtheta\in\Theta}\max_{i\in[k]} g_i(\vtheta)$ (Eq.~\ref{eq:feas_minimax}), set $\eta = \frac{2\log k}{\epsilon}$ and $\alpha = \frac{\sqrt{2\log k}}{G\sqrt{T}}$ where $G \coloneqq M + \frac{1 + \max(\eta M, \log k)}{\eta} $. Then for any $T \geq \frac{4(G\sqrt{2\log k} + C)^2}{\epsilon^2}$, DRATS satisfies:
\begin{equation}
\label{eq:main_theorem}
\max_{i \in [k]} \frac{1}{T}\sum\nolimits_{t=1}^T g_i(\vtheta_t)
\leq
\min_{\vtheta \in \Theta} \max_{i \in [k]} g_i(\vtheta) + \epsilon.
\end{equation}
\end{theorem}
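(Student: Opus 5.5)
We will treat the iterates as a no-regret game on the regularized payoff and compare both players against fixed comparators. Write $\Phi(\vq) = \frac{1}{\eta}\mathrm{KL}(\vq\|\vp_0)$ and $\ell_t(\vq) = \langle \vq, g(\vtheta_t)\rangle - \Phi(\vq)$, which is concave in $\vq$. Since $\vp_0$ is uniform, $0 \le \Phi(\vq) \le \frac{\log k}{\eta}$ on $\Delta^k$. With $\eta = \frac{2\log k}{\epsilon}$ this gives $\Phi \le \epsilon/2$, so the regularization costs at most $\epsilon/2$; the remaining $\epsilon/2$ must come from the two regret terms. This accounts for the factor $2$ in $\eta$ and the factor $4$ in the condition on $T$.

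\textbf{Step 1 ($\vq$-player regret).} We apply the standard mirror ascent regret bound with the negative-entropy mirror map on the simplex, which is $1$-strongly convex w.r.t. $\|\cdot\|_1$ and has diameter $\log k$ from the uniform initialization. For any $\bar\vq \in \Delta^k$, concavity gives $\sum_t [\ell_t(\bar\vq) - \ell_t(\vq_t)] \le \sum_t \langle \nabla \ell_t(\vq_t), \bar\vq - \vq_t\rangle \le \frac{\log k}{\alpha} + \frac{\alpha}{2}\sum_t \|\nabla\ell_t(\vq_t)\|_\infty^2$. The gradient coordinates are $g_i - \frac{1}{\eta}(\log(k q_{t,i}) + 1)$. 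We bound $\|\nabla \ell_t\|_\infty \le G$ by controlling $|\log(k q_{t,i})|$ along the iterates by $\max(\eta M,\log k)$.

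This is the step I expect to be the main obstacle. The upper side is immediate, since $\log(kq)\le\log k$. The lower side requires showing inductively that the multiplicative update with the entropic pull toward $\vp_0$ keeps $\log(k q_{t,i}) \ge -\eta M$. The pull $-\frac{1}{\eta}\log(kq_i)$ becomes positive and restoring once $q_i$ is small, so the update is a contraction toward $\mathrm{softmax}(\eta g)$, whose log-ratios lie in $[-\eta M, \eta M]$. Alternatively, the lower bound follows from the $[\varepsilon,1]$ projection, using the fact that KL projection onto a convex subset does not increase regret against comparators in that subset. I would first try the invariant argument and fall back on the projection. Plugging in $\alpha = \frac{\sqrt{2\log k}}{G\sqrt T}$ gives regret at most $G\sqrt{2T\log k}$.

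\textbf{Step 2 (combine).} Let $\bar g_i = \frac1T\sum_t g_i(\vtheta_t)$ and $i^* = \arg\max_i \bar g_i$, and take $\bar\vq = e_{i^*}$. Then
\begin{equation*}
T\max_i \bar g_i - T\Phi(e_{i^*}) \le \sum_t \ell_t(\vq_t) + G\sqrt{2T\log k} \le \sum_t \langle \vq_t, g(\vtheta_t)\rangle + G\sqrt{2T\log k},
\end{equation*}
where the second inequality uses $\Phi \ge 0$. By Definition~\ref{def:low_regret}, $\sum_t\langle\vq_t,g(\vtheta_t)\rangle \le \min_\vtheta \sum_t \langle \vq_t, g(\vtheta)\rangle + C\sqrt T \le T\min_\vtheta\max_i g_i(\vtheta) + C\sqrt T$, because $\langle\vq_t,g(\vtheta)\rangle \le \max_i g_i(\vtheta)$.

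Dividing by $T$ and using $\Phi(e_{i^*}) = \frac{\log k}{\eta} = \epsilon/2$ gives
\begin{equation*}
\max_i \bar g_i \le \min_\vtheta \max_i g_i(\vtheta) + \frac{\epsilon}{2} + \frac{G\sqrt{2\log k} + C}{\sqrt T}.
\end{equation*}
The condition on $T$ makes the last term at most $\epsilon/2$, which yields Eq.~\ref{eq:main_theorem}.
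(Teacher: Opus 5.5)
Your proposal is correct and follows essentially the same route as the paper. Both use an entropic mirror-ascent regret bound for the $\vq$-player against the vertex $e_{i^*}$, which costs $\frac{1}{\eta}\mathrm{KL}(e_{i^*}\|\vp_0)=\frac{\log k}{\eta}=\frac{\epsilon}{2}$, while the nonnegative regularizer at $\vq_t$ is dropped; this is combined with the $C$-low-regret bound for the $\vtheta$-player and $\langle \vq_t, g(\vtheta)\rangle \le \max_i g_i(\vtheta)$.

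Your log-ratio contraction is exactly the paper's observation that $q_{t+1,i}\propto q_{t,i}^{1-\alpha/\eta}(q_i^*)^{\alpha/\eta}$, which gives $q_{t,i}\ge e^{-\eta M}/k$ and hence the gradient bound $G$. That argument needs $\alpha\le\eta$, which you should state explicitly; it does follow from the prescribed $\alpha$ and the lower bound on $T$, since these give $\alpha\le \epsilon/(2G^2)$ and $G\ge (1+\log k)/\eta$. The invariant also makes your projection fallback unnecessary. That is just as well, because the fallback would not directly handle the boundary comparator $e_{i^*}$.
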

\begin{proof}
    See Appendix~\ref{app:convergence}.
\end{proof}

\section{Experiments}

We design experiments to test two core hypotheses:
\textbf{(H1)} DRATS will achieve a higher return aggregated across all tasks than baselines, or, if all methods converge to similar returns, DRATS will reduce the number of interactions required to match the returns of baselines;
\textbf{(H2)} DRATS will achieve higher worst-task returns than baselines.


\paragraph{Environments.} We evaluate on four benchmarks of increasing complexity: (1) a 4-task Gridworld (Fig.~\ref{fig:gridworld_diagram}); (2) 
MuJoCo6, comprising Swimmer-v4, Hopper-v4, HalfCheetah-v4, Walker2d-v4, Ant-v4, and 
Humanoid-v4~\citep{todorov2012mujoco}; (3) MetaWorld-MT10; and (4) MetaWorld-MT50, suites of 
10 and 50 robotic manipulation tasks with a Sawyer arm~\citep{yu2020meta, mclean2025meta}.
%

\paragraph{Baselines.}
We use several task-sampling baselines listed below. 
Since their original names generally do not reflect their prioritization strategy, we rename each baseline based on how it prioritizes tasks.
\begin{enumerate}[leftmargin=1.5em]
    \item \textbf{\textsc{Uniform}.} Tasks are sampled from a fixed uniform distribution throughout training. 

    \item \textbf{\textsc{Learning Progress} (\textit{e.g.}, ALP-GMM,~\citet{portelas2020teacher}; RIAC, \citet{baranes2009r}).} Prioritizes tasks with the largest absolute change 
    in Monte Carlo return between consecutive updates (\textit{i.e.}, the slope of the learning curve): $z_i = |\widehat{J}_{t,i} - \widehat{J}_{t-1,i}|$.
    %
    %
    
    \item \textbf{\textsc{Learning Potential} (PLR,~\citet{jiang2021prioritized}).}
    Prioritizes tasks with the largest $\ell_1$ value loss under GAE:
    $z_i = \frac{1}{T}\sum_{t=0}^{T}\left|\sum_{k=t}^{T}(\gamma\lambda)^{k-t}\delta_k\right|$,
    where $\delta_k = r_k + \gamma\widehat{V}(s_{k+1}) - \widehat{V}(s_k)$ is the TD-error at 
    step $k$. 
    %

    \item \textbf{Hard First (SMT,~\citet{cho2024hard}).} Samples uniformly from the $K$ unsolved tasks with the smallest current returns.
    A task is ``solved'' if its return exceeds threshold $M_i$ and is then replaced by the unsolved task with the smallest return.
    A task is ``unsolvable'' if its return fails to exceed threshold $m_i$ after 
    a fixed number of steps and is then removed from the active set.
    After $B_1$ steps, the agent samples uniformly from all 
    ``unsolvable'' tasks.
    %
    \item \textbf{Easy First (\textit{e.g.}, \citet{joshi2025benchmarking})}. Concentrates sampling on the easiest task and switches to the next hardest upon reaching a success rate of $0.9$. 
    We only use this baseline in Gridworld; it requires us to rank tasks by difficulty, and it is unclear how to rank all tasks in other benchmarks.

\end{enumerate}

To ensure learning differences reflect differences in prioritization rather than implementation differences, we use the same mirror ascent update rule as DRATS (Eq.~\ref{eq:mirror_ascent_update}) to update the task distribution in \textsc{Learning Progress} and \textsc{Learning Potential}, using $z_i$ in place of $g_i$. 
To prevent catastrophic forgetting, we enforce a minimum sampling probability in \emph{all} baselines: $\varepsilon = 0.02$ in Gridworld and $\varepsilon = 1/(4k)$ in all other experiments.
\textsc{Hard First} (SMT) and \textsc{Learning Potential} (PLR) were originally proposed for off-policy RL, while we use on-policy RL. We describe minor adaptations of these methods to the on-policy setting in Appendix~\ref{app:baselines_modifications}.
To estimate reference returns for DRATS in MT10 and MT50, we initialize $J_i^\text{ref} = r_\text{max} H_\text{max} = 5000$ and 
then update $J_i^\text{ref}$ to the maximum observed return once the agent's success rate exceeds $0.5$.
In MuJoCo6 where tasks have no notion of success, we set $J_i^\text{ref}$ to the maximum observed return.
In Gridworld, we simply use $J_i^\text{ref} = 1$ for all tasks.
We train agents using REINFORCE~\citep{williams1992simple} in Gridworld and PPO~\citep{schulman2017proximal} in all other benchmarks.
We provide hyperparameter tuning details for all baselines in Appendix~\ref{app:hyperparameters}.
We use the same PPO/REINFORCE hyperparameters for all baselines within each benchmark (also listed in  Appendix~\ref{app:hyperparameters}). We build on top of the \texttt{metaworld-algorithms} codebase~\citep{mclean2025meta} for MT10/MT50 and CleanRL~\citep{huang2022cleanrl} for MuJoCo6.
%


\begin{figure}[t]
    \centering
    \begin{subfigure}{0.32\linewidth}
        \includegraphics[width=\linewidth]{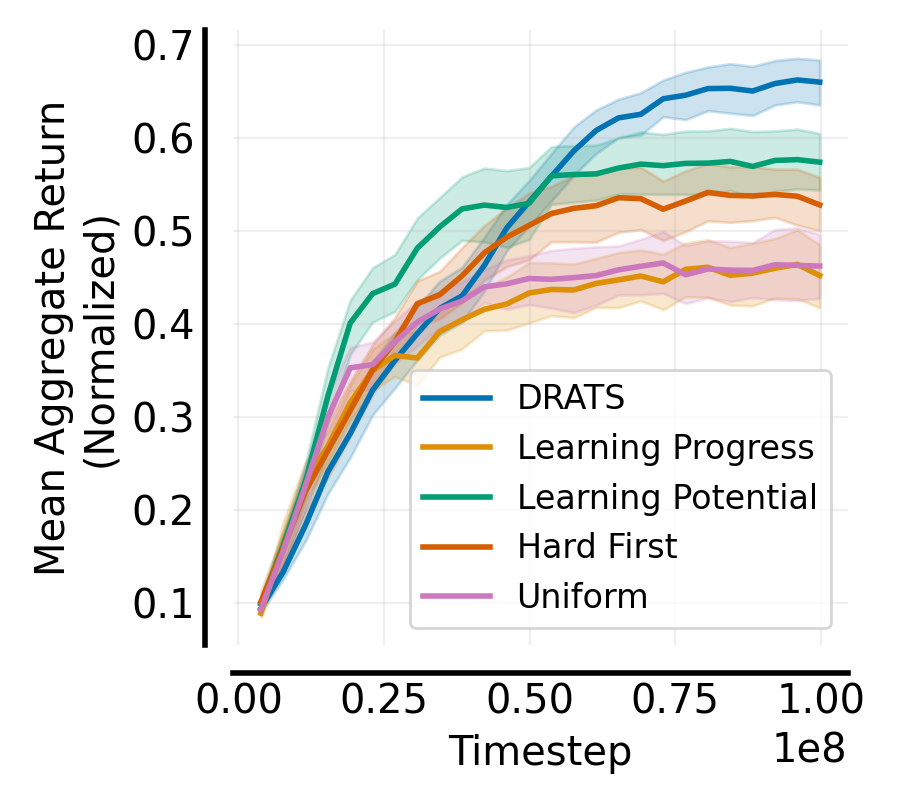}
    \caption{MuJoCo (20 seeds)}
    \label{fig:mujoco_return}
    \end{subfigure}
    \hfill
    \begin{subfigure}{0.32\linewidth}
        \includegraphics[width=\linewidth]{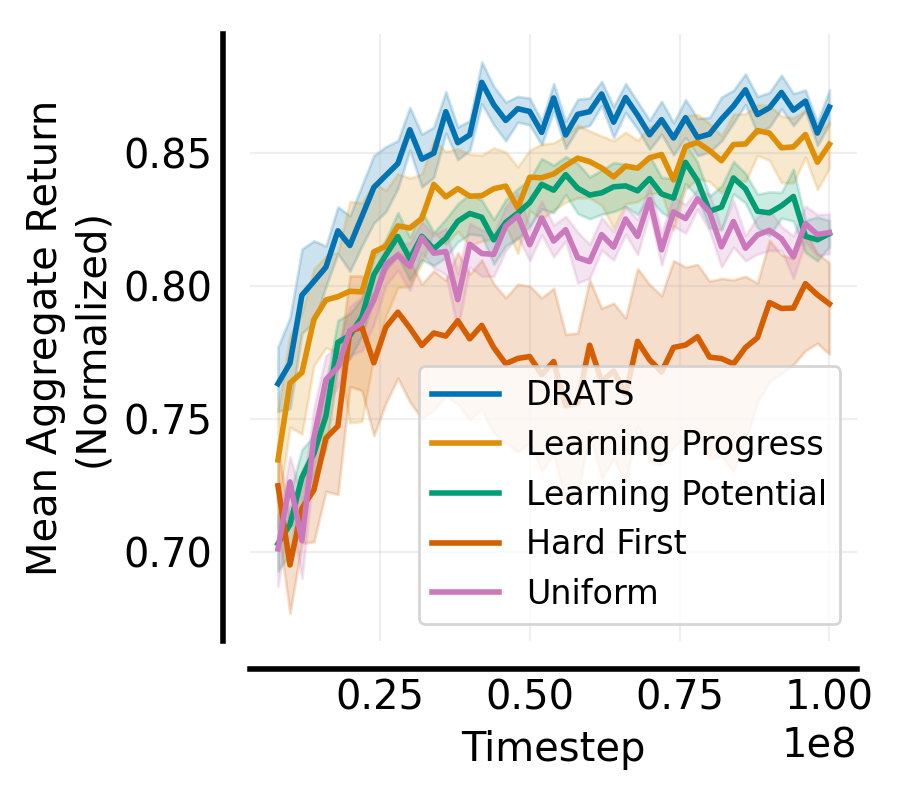}
    \caption{MT10 (20 seeds)}
    \label{fig:mt10_return}
    \end{subfigure}
    \hfill
    \begin{subfigure}{0.32\linewidth}
        \includegraphics[width=\linewidth]{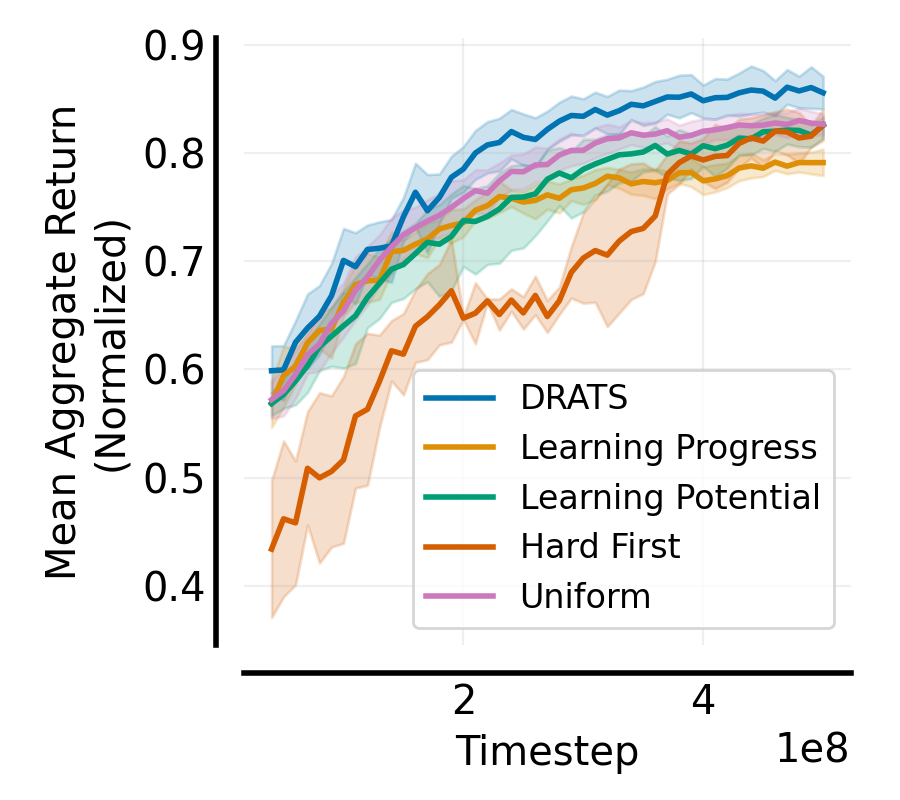}
    \caption{MT50 (10 seeds)}
    \label{fig:mt50_return}
    \end{subfigure}
    \caption{Mean normalized return aggregated over tasks in each benchmark. Shaded regions denote 95\% bootstrap confidence intervals.
    }
    \label{fig:return}
    \vspace{-0.5em}
\end{figure}

\begin{figure}[t]
    \centering
    \begin{subfigure}{0.24\linewidth}
        \includegraphics[width=\linewidth]{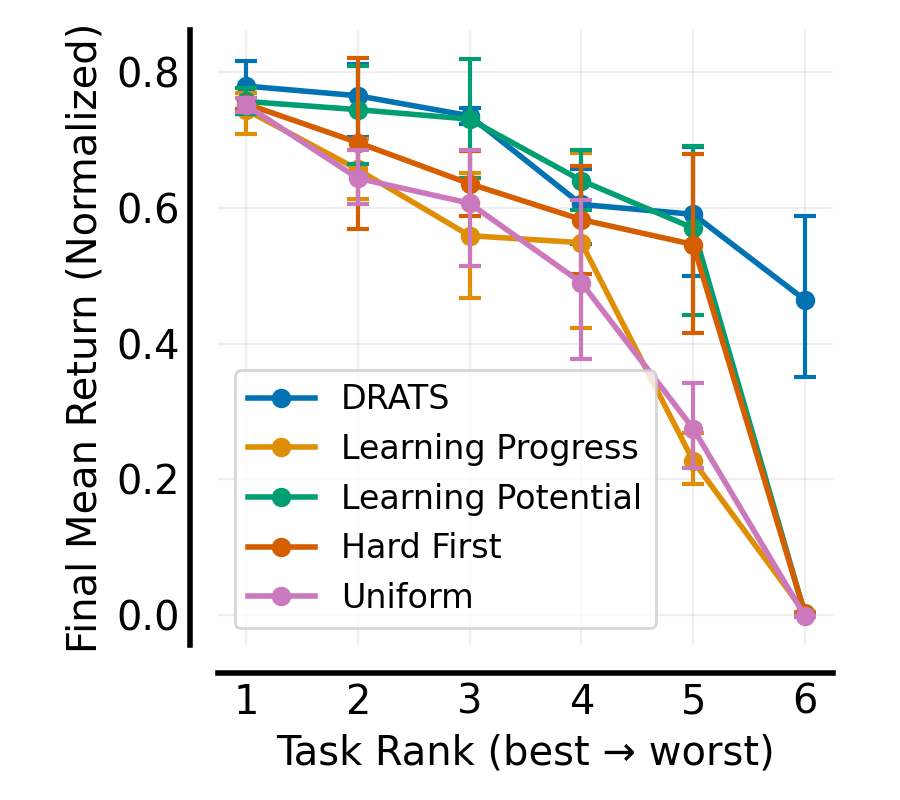}
    \caption{MuJoCo6 (20 seeds)}
    \label{fig:mujoco_task_final_return}
    \end{subfigure}
    \hfill
    \begin{subfigure}{0.24\linewidth}
        \includegraphics[width=\linewidth]{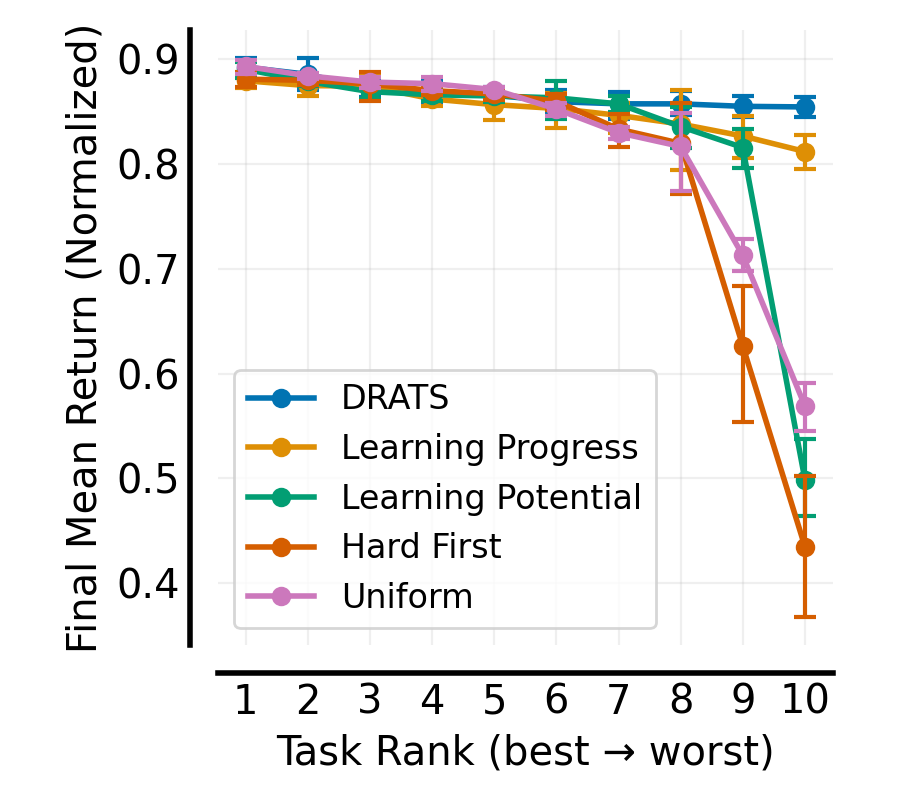}
    \caption{MT10 (20 seeds)}
    \label{fig:mt10_task_final_return}
    \end{subfigure}
    \hfill
    \begin{subfigure}{0.48\linewidth}
        \includegraphics[width=\linewidth]{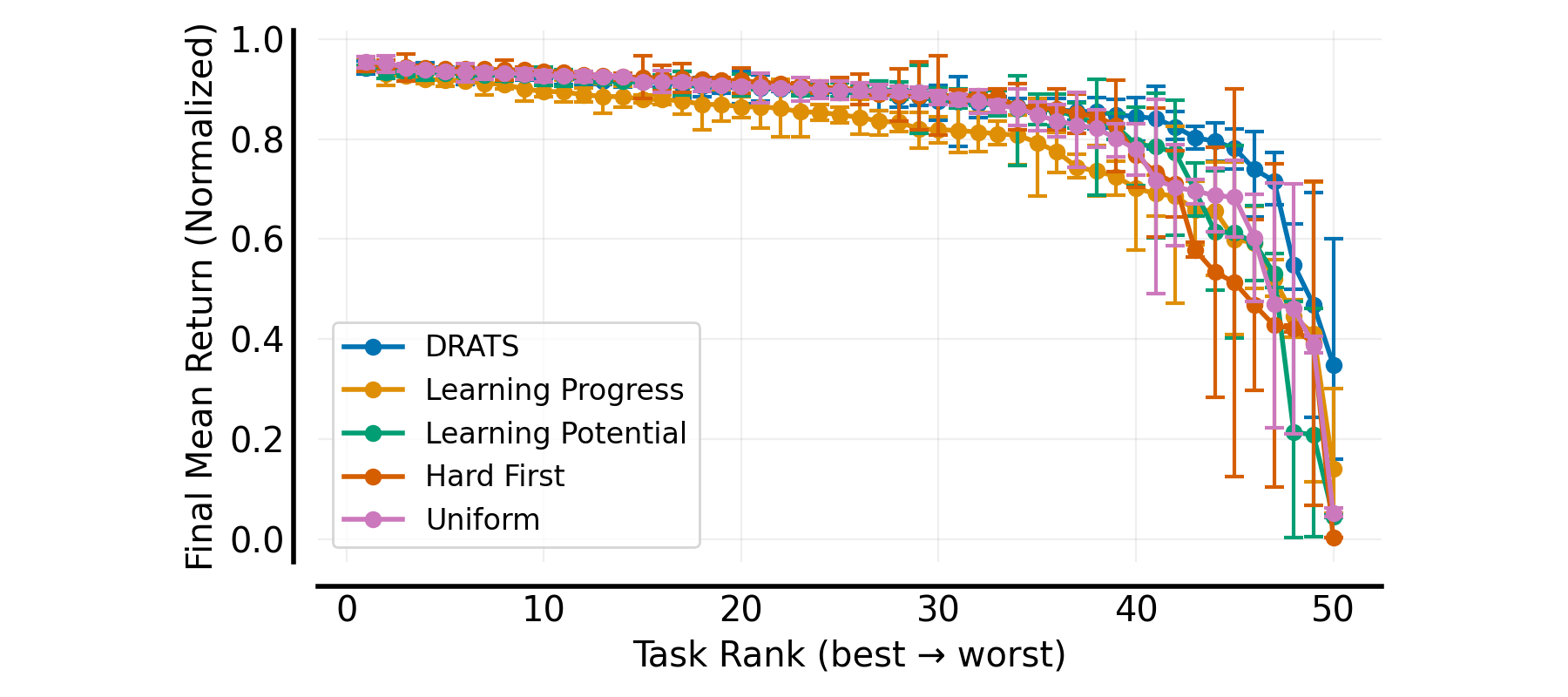}
    \caption{MT50 (10 seeds)}
    \label{fig:mt50_task_final_return}
    \end{subfigure}
    \caption{Final normalized return in each task, sorted from highest to lowest. 
    Error bars denote 95\% bootstrap confidence intervals. 
    }
    \label{fig:task_final_returns}
    \vspace{-1em}
\end{figure}

\paragraph{Per-Task Advantage Normalization.}
Standard advantage normalization computes mean and standard deviation over all data, aggregating across all tasks, which biases learning toward tasks with higher average return: if task~1 has a much larger mean return than task~2, global normalization makes task~2's advantages appear negative, which can make a policy gradient update decrease the probability of actions that are actually beneficial for task~2.
We instead normalize advantages for each task separately.
In Fig.~\ref{fig:adv_norm} of Appendix~\ref{app:experiments}, we show that per-task normalization improves data efficiency compared to global normalization.
We apply per-task normalization to \textit{all} methods. We do not treat it as a contribution of DRATS.


\paragraph{Metrics.}
Since maximum returns differ across tasks, we normalize task return as $\tilde{J}_i = (J_i - J_i^\text{min}) / (J_i^\text{max} - J_i^\text{min}) \in [0, 1]$.
\footnote{We prefer normalized returns over success rates since return can still improve after achieving a perfect success rate by solving tasks in fewer steps. We plot success rates for MT10/MT50 in Fig.~\ref{fig:mt_success_rate} in Appendix~\ref{app:experiments}.}
%
For \textbf{H1}, we report $\tilde{J}_i$ averaged across tasks; when methods reach similar final returns, we measure data efficiency as interactions required to converge.
For \textbf{H2}, we report the distribution of final per-task $\tilde{J}_i$.

\subsection{MuJoCo6, MT10, and MT50 Experiments}

Fig.~\ref{fig:return} shows that DRATS achieves the highest aggregate return in MuJoCo6, MT10, and MT50, supporting \textbf{H1}.
%
%
Fig.~\ref{fig:task_final_returns} shows that this improvement is driven by higher worst-task returns rather than higher returns on easier tasks, supporting \textbf{H2}.
%
%
We now discuss results in more detail.
Due to space constraints, we show returns and sampling probabilities for individual tasks in Appendix~\ref{app:experiments}.

\begin{wrapfigure}{R}{0.5\linewidth}    
    \centering
    \vspace{-1.5em}
    \includegraphics[width=\linewidth]{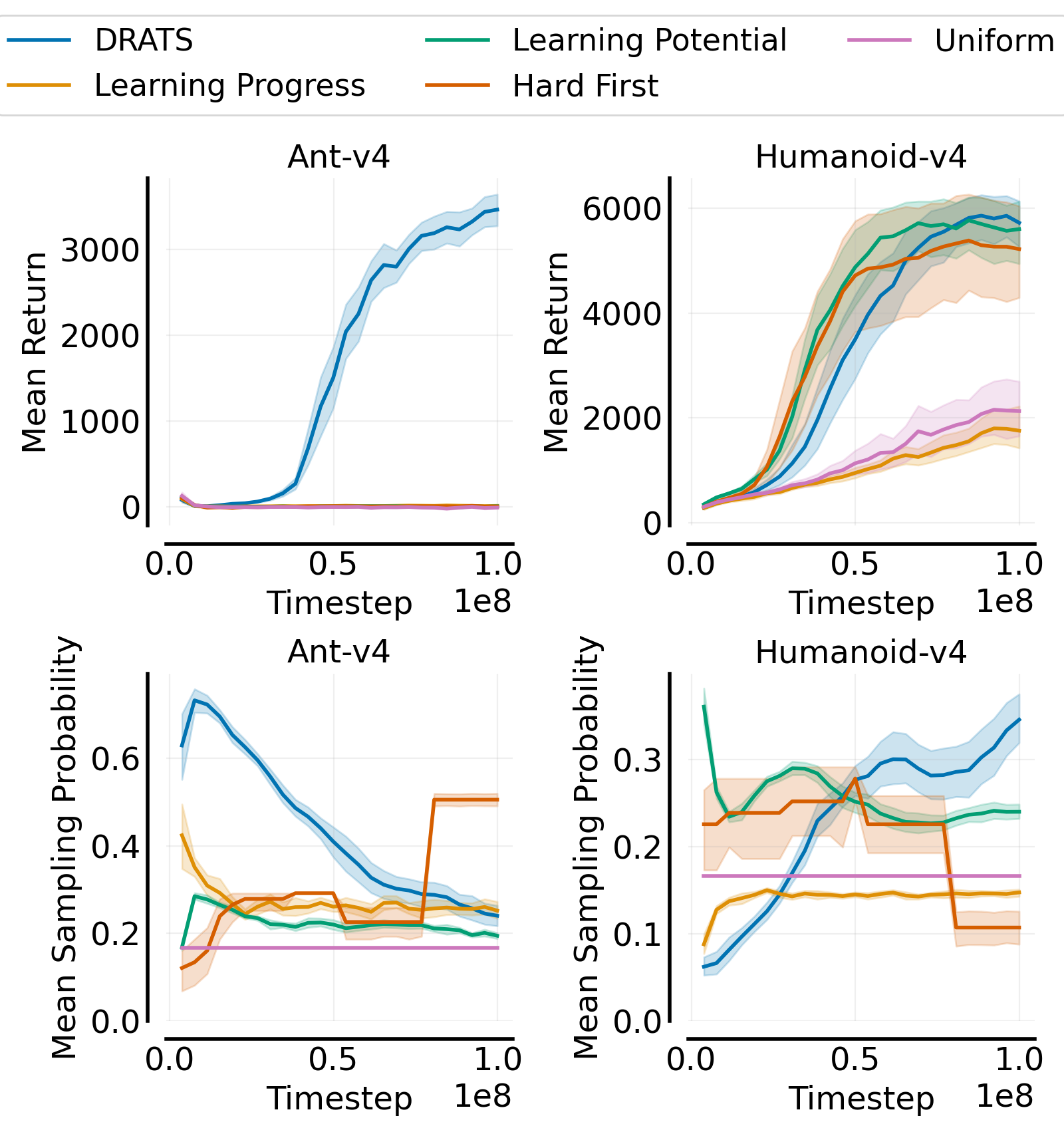}
    \caption{MuJoCo6 task returns and probabilities (20 seeds). Shaded regions denote 95\% bootstrap confidence intervals. }
    \label{fig:mujoco_hard_only}
        \vspace{-2em}
\end{wrapfigure}


\paragraph{MuJoCo6.}
Fig.~\ref{fig:mujoco_hard_only} shows results for the two hardest tasks, Ant and Humanoid; all methods perform similarly on the remaining four tasks (Fig.~\ref{fig:mujoco_task_results} in Appendix~\ref{app:experiments}), so performance differences are driven by these two.
DRATS is the only method to solve both.
DRATS solves Ant by allocating $40$--$70\%$ of sampling probability to it for the first $50$M timesteps. 
All baselines allocate approximately $30\%$ for most of training and fail to make any progress on it.
\textsc{Hard First} and \textsc{Learning Potential} solve Humanoid by allocating $25$--$35\%$ probability to it, while \textsc{Uniform} and \textsc{Learning Progress} allocate at most $17\%$ and thus learn more slowly.
\textsc{Learning Progress} learns more slowly than \textsc{Uniform} because the small slope of its learning curve causes it to \textit{deprioritize} Humanoid even though the task is far from being solved.

\paragraph{MT10.}
As shown in Fig.~\ref{fig:mt10_task_results} of Appendix~\ref{app:experiments}, performance differences are mostly driven by \texttt{pick-place}, \texttt{push}, and \texttt{peg-insert-side} tasks. DRATS shifts probability to them at the start of training and ultimately exceeds \textsc{Uniform}'s final return on them after just 6M timesteps.
\textsc{Learning Progress} and \textsc{Learning Potential} heavily prioritize easy tasks initially (\textit{e.g.}, window-open, window-close) and do not prioritize the harder tasks (\texttt{pick-place}, \texttt{push}) until much later in training.
%
%
\textsc{Hard First} performs poorly on \texttt{pick-place}, \texttt{push}, and \texttt{peg-insert-side}: once a task is declared ``solved'', it abruptly shifts probability away from it, causing performance to collapse and forcing the agent to relearn the task---visible as repeated peaks and dips in their learning curves (Fig.~\ref{fig:mt10_task_results}).

\paragraph{MT50.}
%
As shown in Fig.~\ref{fig:mt50_task_returns} and Fig.~\ref{fig:mt50_task_probs} of Appendix~\ref{app:experiments},
DRATS achieves significantly larger returns than all baselines on hard tasks like \texttt{disassemble} and  \texttt{pick-place-wall} because it places 2-3 times more probability on them.
%
\textsc{Learning Progress} and \textsc{Learning Potential} perform no better than \textsc{Uniform}: both prioritize tasks that are already making rapid progress or have high value estimation error, which tends to coincide with the easier tasks that DRATS deprioritizes. 
\textsc{Learning Progress} also achieves slightly lower final returns on many easy tasks because prioritization weakens near convergence as learning curves flatten. For instance, it reaches a return of approximately 3500 in \texttt{button-press-topdown} while all other baselines reach 3700. We observe similar behavior in tasks like \texttt{coffee-button, door-unlock, hand-insert}, etc. as well (Fig.~\ref{fig:mt50_task_returns} of Appendix~\ref{app:experiments}).
Similar to MT10, \textsc{Hard First} exhibits performance collapse on many hard tasks (\textit{e.g.}, \texttt{bin-picking, basketball, pick-place}) (Fig.~\ref{fig:mt50_task_returns} of Appendix~\ref{app:experiments}).

\begin{wrapfigure}{R}{0.3\linewidth}
\vspace{-4em}
\centering
    \includegraphics[width=\linewidth]{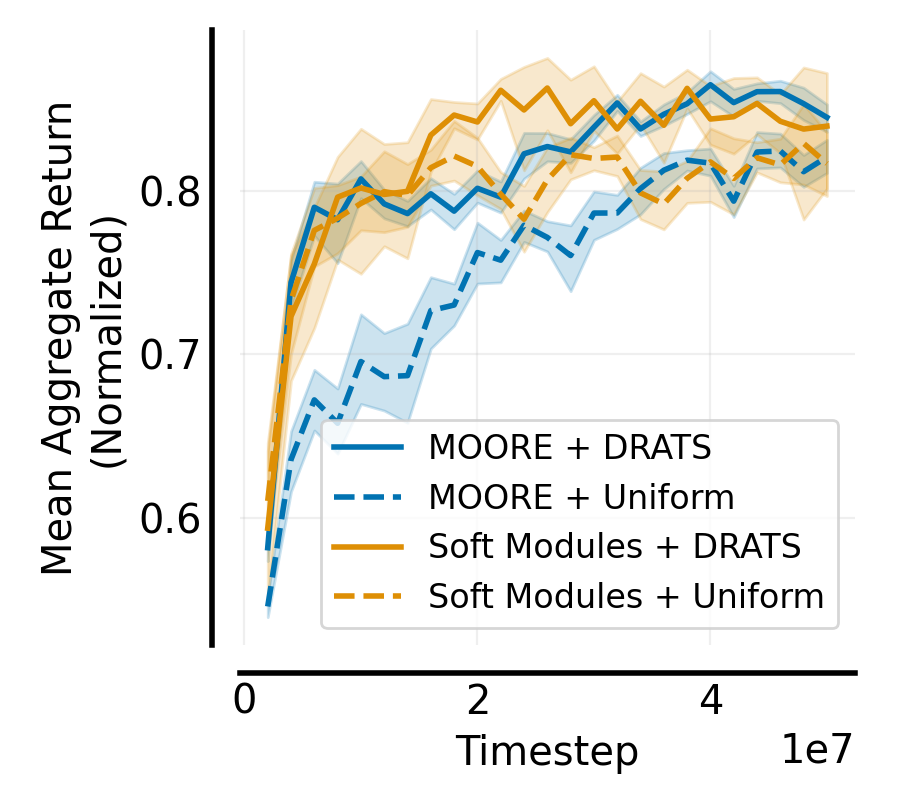}
    \caption{DRATS + MOORE and Soft Modularization in MT10 (10 seeds).}
    \label{fig:mt10_arch}
\vspace{0em}
\end{wrapfigure}

\paragraph{DRATS with Multi-Task Architectures.} 
Since DRATS only modifies data collection, it is orthogonal to multi-task network architectures such as Soft Modularization~\citep{yang2020multi} and MOORE~\citep{hendawy2023multi}. We hypothesize that applying DRATS on top of these architectures improves over using them with uniform task sampling. Fig.~\ref{fig:mt10_arch} verifies this hypothesis, showing that DRATS achieves a larger aggregate return than \textsc{Uniform} when using MOORE or Soft Modularization networks. We include an analogous experiment with MOORE on MT50 in Fig.~\ref{fig:mt50_moore} of Appendix~\ref{app:experiments} with qualitatively similar results.

\subsection{Gridworld Experiments}

\begin{figure}
    \centering
    \begin{subfigure}{0.28\linewidth}
    \includegraphics[width=\linewidth]{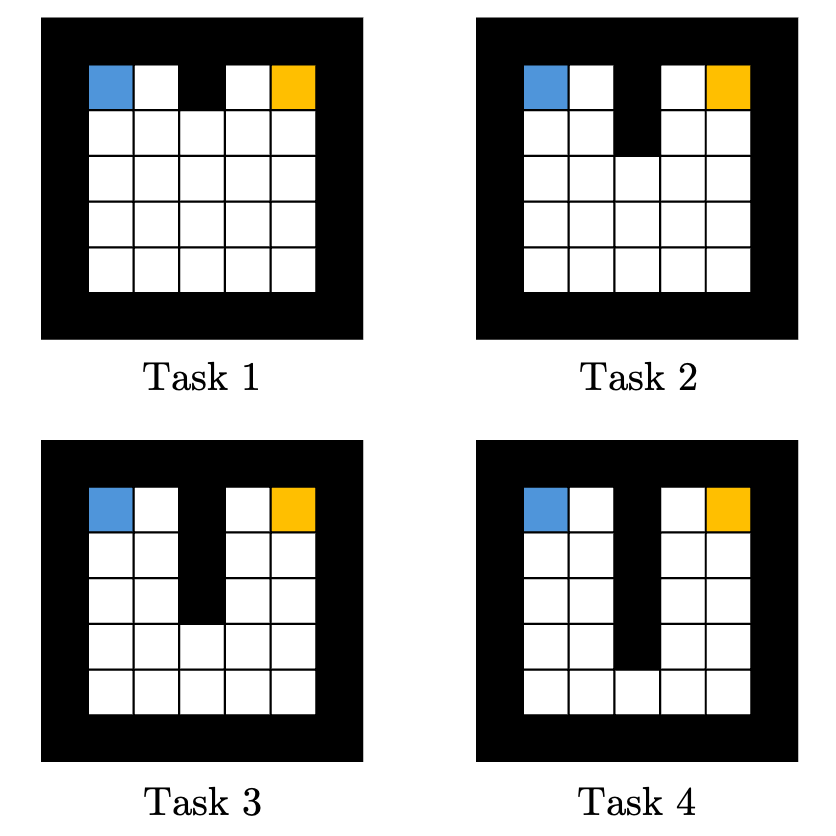}
    \caption{Gridworld tasks}
    \label{fig:gridworld_diagram}
    \end{subfigure} 
    \hfill
    \begin{subfigure}{0.32\linewidth}
    \includegraphics[width=\linewidth]{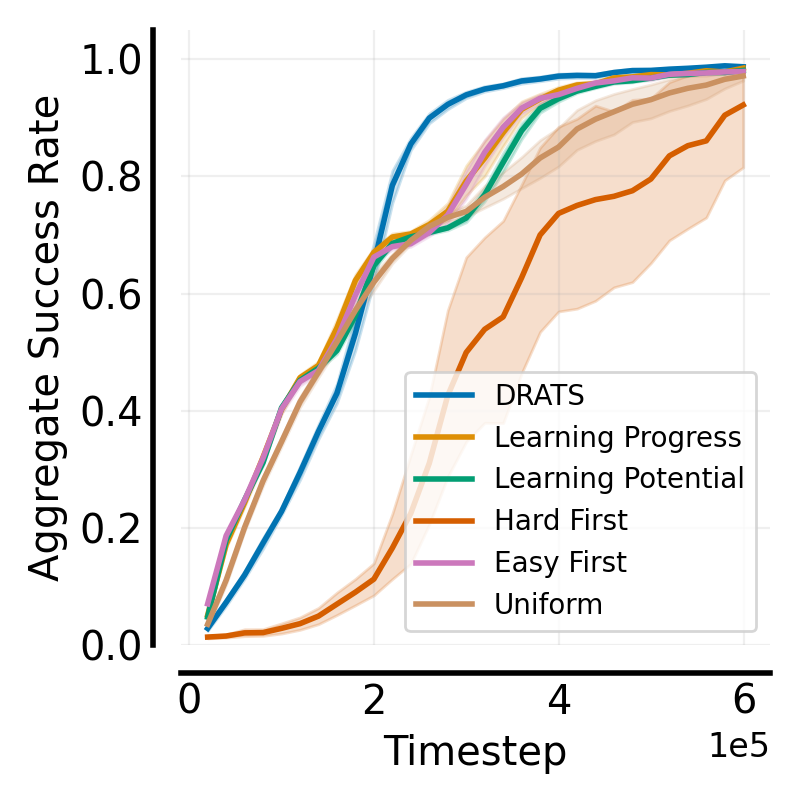}
    \caption{Gridworld, shared networks}
    \label{fig:gridworld_results}
    \end{subfigure}
    \hfill
    \begin{subfigure}{0.32\linewidth}
    \includegraphics[width=\linewidth]{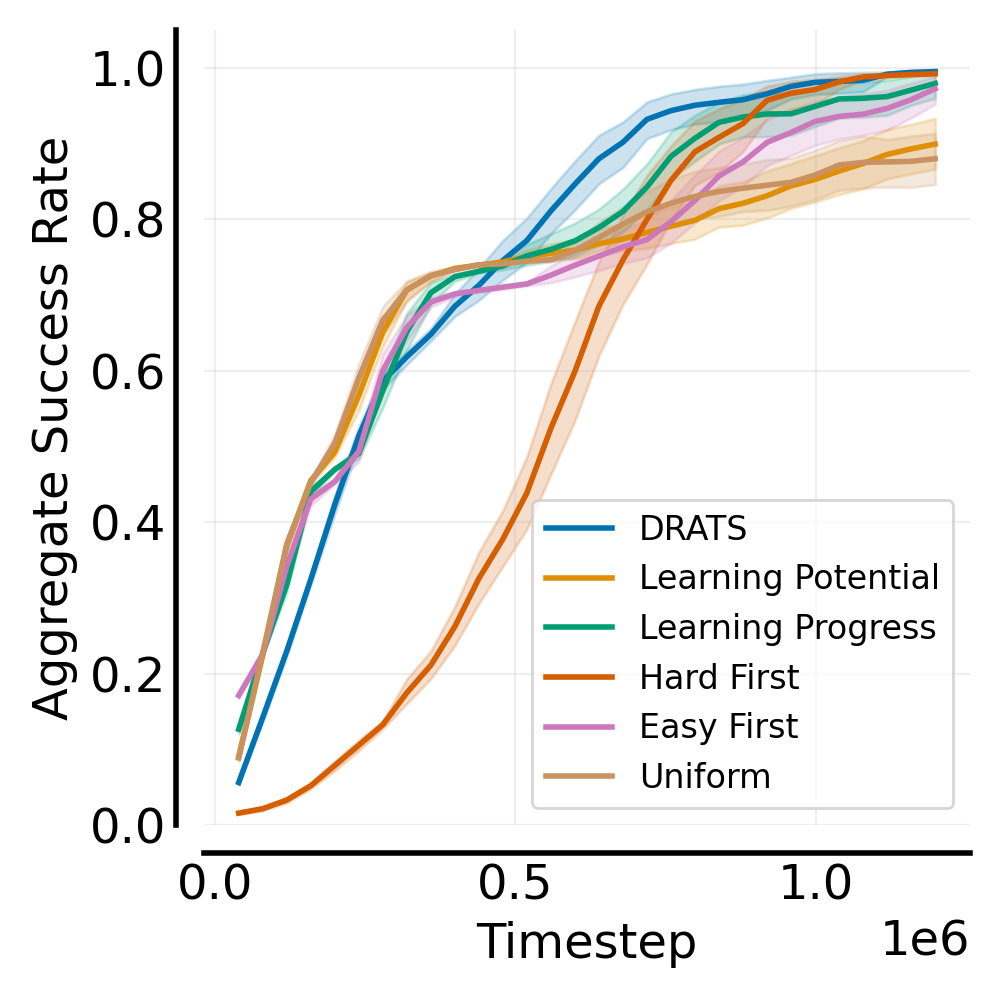}
    \caption{Gridworld, separate networks}
    \label{fig:gridworld_results_sep}
    \end{subfigure}
    \caption{
    \textbf{(a)} A 4-task Gridworld. The agent must navigate from the blue cell to the gold cell, receiving a reward of $+1$ for reaching the goal and reward $-0.001$ otherwise. We truncate episodes after 15 steps. The length of the shortest path to the goal increases from Task 1 to Task 4, so Task 4 is most difficult. 
    \textbf{(b)} Mean success rate over all tasks with shared actor/critic networks that enable positive transfer (50 seeds). \textbf{(c)} Mean success rate over all tasks with separate actor/critic networks for each task to eliminate transfer (50 seeds). Shaded regions denote 95\% bootstrap confidence intervals. 
    }
    \vspace{-0.5em}
\end{figure}

Curriculum learning (CL) methods prioritize easy tasks assuming that progress on easy tasks will accelerate learning on hard ones.
We now use a 4-task Gridworld (Fig.~\ref{fig:gridworld_diagram}) to illustrate how DRATS is more data efficient than CL methods when tasks exhibit positive transfer---and also when they do not.
Fig.~\ref{fig:gridworld_results_sep} shows results when training separate networks per task, which eliminates transfer entirely. Fig.~\ref{fig:gridworld_results} shows results using a shared network for all tasks, which enables positive transfer.\footnote{All methods solve all tasks in fewer steps with shared networks, confirming that transfer is positive.}
DRATS is the most data-efficient in both settings.
In the shared-network setting, DRATS prioritizes the hardest tasks while maintaining sufficient probability on easier tasks to benefit from transfer, solving all tasks simultaneously rather than sequentially (Fig.~\ref{fig:gridworld_tasks} and Fig.~\ref{fig:gridworld_probabilities} of Appendix~\ref{app:experiments}).
\textsc{Easy First}, \textsc{Learning Progress}, and \textsc{Learning Potential} do not prioritize Task~4 until Tasks~1--3 are mostly solved and thus converge more slowly.
\textsc{Hard First} converges slowly for the opposite reason: by neglecting easy tasks, it does not benefit as much from positive transfer that would help learn the hardest task.
DRATS outperforms all baselines even without transfer, showing that its advantage is not contingent on shared task structure.
These results also support \textbf{H1}.

\subsection{Ablations}

In Appendix~\ref{app:ablations}, we ablate the temperature $\eta$ and mirror ascent step size $\alpha$, compare per-task and global advantage normalization, and compare DRATS against a variant of DRATS that samples tasks uniformly but reweights each task's return by $q_i$, isolating the benefit of adaptive data collection from adaptive objective weighting.
We find that DRATS is relatively robust to the choice of $\eta$ and $\alpha$, that per-task normalization is more data efficient than global normalization, and that adaptive sampling is more data efficient than adaptive objective reweighting.

\section{Limitations}

DRATS has two main limitations.
First, DRATS requires reference returns, which we estimate online rather than requiring the user to specify hardcoded thresholds---a limitation of common curriculum learning methods~\citep{cho2024hard, kanitscheider2021multi, wang2020enhanced}.
Online estimation may underestimate reference returns early in training, causing DRATS to under-allocate data to tasks that can still improve.
Second, DRATS assumes a discrete task set, but tasks may also be continuous.
%
In principle, one could model the task distribution as a neural network $\vq_\phi$ with weights $\vphi$ that outputs the parameters of a distribution over tasks (\textit{e.g.}, a Gaussian) and then sample from $\vq_\phi$ via the reparameterization trick~\citep{kingma2013auto}.
We leave this extension to future work.

\section{Conclusion}
%
%

We study \textit{data imbalance} in multi-task reinforcement learning (MTRL). Standard MTRL allocates an equal interaction budget to all tasks, which over-allocates data to easy tasks and under-allocates data to hard ones, causing agents to learn easy tasks quickly and hard tasks slowly.
To enable balanced RL across tasks, we introduce Distributionally Robust Adaptive Task Sampling (DRATS), an algorithm that adaptively prioritizes sampling tasks furthest from being solved.
We derive DRATS by formalizing MTRL as a feasibility problem from which we derive a minimax objective for minimizing the worst-case return gap across tasks.
In benchmarks like MetaWorld-MT10 and MT50, DRATS improves data efficiency and increases worst-task performance compared to existing task sampling algorithms. 

\bibliography{main}

@inproceedings{corrado2025automixalign,
  title={AutoMixAlign: Adaptive Data Mixing for Multi-Task Preference Optimization in LLMs},
  author={Corrado, Nicholas E and Katz-Samuels, Julian and Devraj, Adithya M and Yun, Hyokun and Zhang, Chao and Xu, Yi and Pan, Yi and Yin, Bing and Chilimbi, Trishul},
  booktitle={Proceedings of the 63rd Annual Meeting of the Association for Computational Linguistics (Volume 1: Long Papers)},
  pages={20234--20258},
  year={2025}
}

@article{xie2023doremi,
  title={Doremi: Optimizing data mixtures speeds up language model pretraining},
  author={Xie, Sang Michael and Pham, Hieu and Dong, Xuanyi and Du, Nan and Liu, Hanxiao and Lu, Yifeng and Liang, Percy S and Le, Quoc V and Ma, Tengyu and Yu, Adams Wei},
  journal={Advances in Neural Information Processing Systems},
  volume={36},
  pages={69798--69818},
  year={2023}
}

@article{sagawa2019distributionally,
  title={Distributionally robust neural networks for group shifts: On the importance of regularization for worst-case generalization},
  author={Sagawa, Shiori and Koh, Pang Wei and Hashimoto, Tatsunori B and Liang, Percy},
  journal={arXiv preprint arXiv:1911.08731},
  year={2019}
}

@article{schulman2017proximal,
  title={Proximal policy optimization algorithms},
  author={Schulman, John and Wolski, Filip and Dhariwal, Prafulla and Radford, Alec and Klimov, Oleg},
  journal={arXiv preprint arXiv:1707.06347},
  year={2017}
}

@article{andrychowicz2017hindsight,
  title={Hindsight experience replay},
  author={Andrychowicz, Marcin and Wolski, Filip and Ray, Alex and Schneider, Jonas and Fong, Rachel and Welinder, Peter and McGrew, Bob and Tobin, Josh and Pieter Abbeel, OpenAI and Zaremba, Wojciech},
  journal={Advances in neural information processing systems},
  volume={30},
  year={2017}
}

@article{williams1992simple,
  title={Simple statistical gradient-following algorithms for connectionist reinforcement learning},
  author={Williams, Ronald J},
  journal={Reinforcement learning},
  pages={5--32},
  year={1992},
  publisher={Springer}
}

@book{puterman2014markov,
  title={Markov decision processes: discrete stochastic dynamic programming},
  author={Puterman, Martin L},
  year={2014},
  publisher={John Wiley \& Sons}
}

@article{huang2022cleanrl,
  author  = {Shengyi Huang and Rousslan Fernand Julien Dossa and Chang Ye and Jeff Braga and Dipam Chakraborty and Kinal Mehta and João G.M. Araújo},
  title   = {CleanRL: High-quality Single-file Implementations of Deep Reinforcement Learning Algorithms},
  journal = {Journal of Machine Learning Research},
  year    = {2022},
  volume  = {23},
  number  = {274},
  pages   = {1--18},
  url     = {http://jmlr.org/papers/v23/21-1342.html}
}

@inproceedings{nikishin2022primacy,
  title={The primacy bias in deep reinforcement learning},
  author={Nikishin, Evgenii and Schwarzer, Max and D’Oro, Pierluca and Bacon, Pierre-Luc and Courville, Aaron},
  booktitle={International Conference on Machine Learning},
  year={2022},
  organization={PMLR}
}

@inproceedings{todorov2012mujoco,
  title={Mujoco: A physics engine for model-based control},
  author={Todorov, Emanuel and Erez, Tom and Tassa, Yuval},
  booktitle={2012 IEEE/RSJ international conference on intelligent robots and systems},
  pages={5026--5033},
  year={2012},
  organization={IEEE}
}

@article{he2024robust,
  title={Robust multi-task learning with excess risks},
  author={He, Yifei and Zhou, Shiji and Zhang, Guojun and Yun, Hyokun and Xu, Yi and Zeng, Belinda and Chilimbi, Trishul and Zhao, Han},
  journal={arXiv preprint arXiv:2402.02009},
  year={2024}
}

@article{mclean2025multi,
  title={Multi-Task Reinforcement Learning Enables Parameter Scaling},
  author={McLean, Reginald and Chatzaroulas, Evangelos and Terry, Jordan and Woungang, Isaac and Farsad, Nariman and Castro, Pablo Samuel},
  journal={arXiv preprint arXiv:2503.05126},
  year={2025}
}

@inproceedings{oren2019distributionally,
  title={Distributionally Robust Language Modeling},
  author={Oren, Shay and others},
  year={2019},
  booktitle={Proceedings of the Conference on Empirical Methods in Natural Language Processing (EMNLP)},
}

@article{xie2024doremi,
  title={Doremi: Optimizing data mixtures speeds up language model pretraining},
  author={Xie, Sang Michael and Pham, Hieu and Dong, Xuanyi and Du, Nan and Liu, Hanxiao and Lu, Yifeng and Liang, Percy S and Le, Quoc V and Ma, Tengyu and Yu, Adams Wei},
  journal={Advances in Neural Information Processing Systems},
  volume={36},
  year={2024}
}

@article{michel2021balancing,
  title={Balancing average and worst-case accuracy in multitask learning},
  author={Michel, Paul and Ruder, Sebastian and Yogatama, Dani},
  journal={arXiv preprint arXiv:2110.05838},
  year={2021}
}

@article{desideri2009multiple,
  title={Multiple-Gradient Descent Algorithm for Multiobjective Optimization},
  author={Desideri, Jacopo},
  year={2009},
  journal={Journal of Optimization Theory and Applications},
  volume={142},
  number={3},
  pages={639--656},
  doi={10.1007/s10957-008-9504-1}
}

@article{navon2022multi,
  title={Multi-task learning as a bargaining game},
  author={Navon, Aviv and Shamsian, Aviv and Achituve, Idan and Maron, Haggai and Kawaguchi, Kenji and Chechik, Gal and Fetaya, Ethan},
  journal={arXiv preprint arXiv:2202.01017},
  year={2022}
}

@inproceedings{liu2021towards,
  title={Towards impartial multi-task learning},
  author={Liu, Liyang and Li, Yi and Kuang, Zhanghui and Xue, J and Chen, Yimin and Yang, Wenming and Liao, Qingmin and Zhang, Wayne},
  year={2021},
  organization={iclr}
}

@article{liu2021conflict,
  title={Conflict-averse gradient descent for multi-task learning},
  author={Liu, Bo and Liu, Xingchao and Jin, Xiaojie and Stone, Peter and Liu, Qiang},
  journal={Advances in Neural Information Processing Systems},
  volume={34},
  pages={18878--18890},
  year={2021}
}

@article{yu2020gradient,
  title={Gradient surgery for multi-task learning},
  author={Yu, Tianhe and Kumar, Saurabh and Gupta, Abhishek and Levine, Sergey and Hausman, Karol and Finn, Chelsea},
  journal={Advances in Neural Information Processing Systems},
  volume={33},
  pages={5824--5836},
  year={2020}
}

@article{an2020resampling,
  title={Why resampling outperforms reweighting for correcting sampling bias with stochastic gradients},
  author={An, Jing and Ying, Lexing and Zhu, Yuhua},
  journal={arXiv preprint arXiv:2009.13447},
  year={2020}
}

@article{shalev2012online,
  title={Online learning and online convex optimization},
  author={Shalev-Shwartz, Shai and others},
  journal={Foundations and Trends{\textregistered} in Machine Learning},
  volume={4},
  number={2},
  pages={107--194},
  year={2012},
  publisher={Now Publishers, Inc.}
}

@article{albalak2023efficient,
  title={Efficient online data mixing for language model pre-training},
  author={Albalak, Alon and Pan, Liangming and Raffel, Colin and Wang, William Yang},
  journal={arXiv preprint arXiv:2312.02406},
  year={2023}
}

@article{xia2023sheared,
  title={Sheared llama: Accelerating language model pre-training via structured pruning},
  author={Xia, Mengzhou and Gao, Tianyu and Zeng, Zhiyuan and Chen, Danqi},
  journal={arXiv preprint arXiv:2310.06694},
  year={2023}
}

@inproceedings{bengio2009curriculum,
  title={Curriculum learning},
  author={Bengio, Yoshua and Louradour, J{\'e}r{\^o}me and Collobert, Ronan and Weston, Jason},
  booktitle={Proceedings of the 26th annual international conference on machine learning},
  pages={41--48},
  year={2009}
}

@article{narvekar2020curriculum,
  title={Curriculum learning for reinforcement learning domains: A framework and survey},
  author={Narvekar, Sanmit and Peng, Bei and Leonetti, Matteo and Sinapov, Jivko and Taylor, Matthew E and Stone, Peter},
  journal={Journal of Machine Learning Research},
  volume={21},
  number={181},
  pages={1--50},
  year={2020}
}

@inproceedings{florensa2018automatic,
  title={Automatic goal generation for reinforcement learning agents},
  author={Florensa, Carlos and Held, David and Geng, Xinyang and Abbeel, Pieter},
  booktitle={International conference on machine learning},
  pages={1515--1528},
  year={2018},
  organization={PMLR}
}

@article{ren2019exploration,
  title={Exploration via hindsight goal generation},
  author={Ren, Zhizhou and Dong, Kefan and Zhou, Yuan and Liu, Qiang and Peng, Jian},
  journal={Advances in Neural Information Processing Systems},
  volume={32},
  year={2019}
}

@article{pong2019skew,
  title={Skew-fit: State-covering self-supervised reinforcement learning},
  author={Pong, Vitchyr H and Dalal, Murtaza and Lin, Steven and Nair, Ashvin and Bahl, Shikhar and Levine, Sergey},
  journal={arXiv preprint arXiv:1903.03698},
  year={2019}
}

@article{eysenbach2020c,
  title={C-learning: Learning to achieve goals via recursive classification},
  author={Eysenbach, Benjamin and Salakhutdinov, Ruslan and Levine, Sergey},
  journal={arXiv preprint arXiv:2011.08909},
  year={2020}
}

@article{zhang2021c,
  title={C-planning: An automatic curriculum for learning goal-reaching tasks},
  author={Zhang, Tianjun and Eysenbach, Benjamin and Salakhutdinov, Ruslan and Levine, Sergey and Gonzalez, Joseph E},
  journal={arXiv preprint arXiv:2110.12080},
  year={2021}
}

@article{shah2021rapid,
  title={Rapid exploration for open-world navigation with latent goal models},
  author={Shah, Dhruv and Eysenbach, Benjamin and Kahn, Gregory and Rhinehart, Nicholas and Levine, Sergey},
  journal={arXiv preprint arXiv:2104.05859},
  year={2021}
}

@inproceedings{chane2021goal,
  title={Goal-conditioned reinforcement learning with imagined subgoals},
  author={Chane-Sane, Elliot and Schmid, Cordelia and Laptev, Ivan},
  booktitle={International conference on machine learning},
  pages={1430--1440},
  year={2021},
  organization={PMLR}
}

@article{zhang2020automatic,
  title={Automatic curriculum learning through value disagreement},
  author={Zhang, Yunzhi and Abbeel, Pieter and Pinto, Lerrel},
  journal={Advances in Neural Information Processing Systems},
  volume={33},
  pages={7648--7659},
  year={2020}
}

@inproceedings{jiang2021prioritized,
  title={Prioritized level replay},
  author={Jiang, Minqi and Grefenstette, Edward and Rockt{\"a}schel, Tim},
  booktitle={International Conference on Machine Learning},
  pages={4940--4950},
  year={2021},
  organization={PMLR}
}

@article{huang2022curriculum,
  title={Curriculum reinforcement learning using optimal transport via gradual domain adaptation},
  author={Huang, Peide and Xu, Mengdi and Zhu, Jiacheng and Shi, Laixi and Fang, Fei and Zhao, Ding},
  journal={Advances in neural information processing systems},
  volume={35},
  pages={10656--10670},
  year={2022}
}

@article{cho2023outcome,
  title={Outcome-directed reinforcement learning by uncertainty \& temporal distance-aware curriculum goal generation},
  author={Cho, Daesol and Lee, Seungjae and Kim, H Jin},
  journal={arXiv preprint arXiv:2301.11741},
  year={2023}
}

@article{liu2024single,
  title={A single goal is all you need: Skills and exploration emerge from contrastive rl without rewards, demonstrations, or subgoals},
  author={Liu, Grace and Tang, Michael and Eysenbach, Benjamin},
  journal={arXiv preprint arXiv:2408.05804},
  year={2024}
}

@article{kanitscheider2021multi,
  title={Multi-task curriculum learning in a complex, visual, hard-exploration domain: Minecraft},
  author={Kanitscheider, Ingmar and Huizinga, Joost and Farhi, David and Guss, William Hebgen and Houghton, Brandon and Sampedro, Raul and Zhokhov, Peter and Baker, Bowen and Ecoffet, Adrien and Tang, Jie and others},
  journal={arXiv preprint arXiv:2106.14876},
  year={2021}
}

@article{wang2019paired,
  title={Paired open-ended trailblazer (poet): Endlessly generating increasingly complex and diverse learning environments and their solutions},
  author={Wang, Rui and Lehman, Joel and Clune, Jeff and Stanley, Kenneth O},
  journal={arXiv preprint arXiv:1901.01753},
  year={2019}
}

@inproceedings{wang2020enhanced,
  title={Enhanced poet: Open-ended reinforcement learning through unbounded invention of learning challenges and their solutions},
  author={Wang, Rui and Lehman, Joel and Rawal, Aditya and Zhi, Jiale and Li, Yulun and Clune, Jeffrey and Stanley, Kenneth},
  booktitle={International conference on machine learning},
  pages={9940--9951},
  year={2020},
  organization={PMLR}
}

@inproceedings{cho2024hard,
  title={Hard tasks first: Multi-task reinforcement learning through task scheduling},
  author={Cho, Myungsik and Park, Jongeui and Lee, Suyoung and Sung, Youngchul},
  booktitle={Forty-first International Conference on Machine Learning},
  year={2024}
}

@article{matiisen2019teacher,
  title={Teacher--student curriculum learning},
  author={Matiisen, Tambet and Oliver, Avital and Cohen, Taco and Schulman, John},
  journal={IEEE transactions on neural networks and learning systems},
  volume={31},
  number={9},
  pages={3732--3740},
  year={2019},
  publisher={IEEE}
}

@article{baranes2009r,
  title={R-iac: Robust intrinsically motivated exploration and active learning},
  author={Baranes, Adrien and Oudeyer, Pierre-Yves},
  journal={IEEE Transactions on Autonomous Mental Development},
  volume={1},
  number={3},
  pages={155--169},
  year={2009},
  publisher={IEEE}
}

@inproceedings{mysore2019reward,
  title={Reward-guided curriculum for robust reinforcement learning},
  author={Mysore, Siddharth and Platt, Robert and Saenko, Kate},
  booktitle={Workshop on Multi-task and Lifelong Reinforcement Learning at ICML},
  year={2019}
}

@inproceedings{colas2019curious,
  title={Curious: intrinsically motivated modular multi-goal reinforcement learning},
  author={Colas, C{\'e}dric and Fournier, Pierre and Chetouani, Mohamed and Sigaud, Olivier and Oudeyer, Pierre-Yves},
  booktitle={International conference on machine learning},
  pages={1331--1340},
  year={2019},
  organization={PMLR}
}

@inproceedings{portelas2020teacher,
  title={Teacher algorithms for curriculum learning of deep rl in continuously parameterized environments},
  author={Portelas, R{\'e}my and Colas, C{\'e}dric and Hofmann, Katja and Oudeyer, Pierre-Yves},
  booktitle={Conference on Robot Learning},
  pages={835--853},
  year={2020},
  organization={PMLR}
}

@article{mandal2025distributionally,
  title={Distributionally robust reinforcement learning with human feedback},
  author={Mandal, Debmalya and Sasnauskas, Paulius and Radanovic, Goran},
  journal={arXiv preprint arXiv:2503.00539},
  year={2025}
}

@article{panaganti2026group,
  title={Group Distributionally Robust Optimization-Driven Reinforcement Learning for LLM Reasoning},
  author={Panaganti, Kishan and Liang, Zhenwen and Yu, Wenhao and Mi, Haitao and Yu, Dong},
  journal={arXiv preprint arXiv:2601.19280},
  year={2026}
}

@article{brunskill2013sample,
  title={Sample complexity of multi-task reinforcement learning},
  author={Brunskill, Emma and Li, Lihong},
  journal={arXiv preprint arXiv:1309.6821},
  year={2013}
}

@inproceedings{sodhani2021multi,
  title={Multi-task reinforcement learning with context-based representations},
  author={Sodhani, Shagun and Zhang, Amy and Pineau, Joelle},
  booktitle={International conference on machine learning},
  pages={9767--9779},
  year={2021},
  organization={PMLR}
}

@article{sun2022paco,
  title={Paco: Parameter-compositional multi-task reinforcement learning},
  author={Sun, Lingfeng and Zhang, Haichao and Xu, Wei and Tomizuka, Masayoshi},
  journal={Advances in Neural Information Processing Systems},
  volume={35},
  pages={21495--21507},
  year={2022}
}

@article{yang2020multi,
  title={Multi-task reinforcement learning with soft modularization},
  author={Yang, Ruihan and Xu, Huazhe and Wu, Yi and Wang, Xiaolong},
  journal={Advances in Neural Information Processing Systems},
  volume={33},
  pages={4767--4777},
  year={2020}
}

@article{hendawy2023multi,
  title={Multi-task reinforcement learning with mixture of orthogonal experts},
  author={Hendawy, Ahmed and Peters, Jan and D'Eramo, Carlo},
  journal={arXiv preprint arXiv:2311.11385},
  year={2023}
}

@article{liu2023famo,
  title={Famo: Fast adaptive multitask optimization},
  author={Liu, Bo and Feng, Yihao and Stone, Peter and Liu, Qiang},
  journal={Advances in Neural Information Processing Systems},
  volume={36},
  pages={57226--57243},
  year={2023}
}

@inproceedings{yu2020meta,
  title={Meta-world: A benchmark and evaluation for multi-task and meta reinforcement learning},
  author={Yu, Tianhe and Quillen, Deirdre and He, Zhanpeng and Julian, Ryan and Hausman, Karol and Finn, Chelsea and Levine, Sergey},
  booktitle={Conference on robot learning},
  pages={1094--1100},
  year={2020},
  organization={PMLR}
}

@article{mclean2025meta,
  title={Meta-world+: An improved, standardized, rl benchmark},
  author={McLean, Reginald and Chatzaroulas, Evangelos and McCutcheon, Luc and R{\"o}der, Frank and Yu, Tianhe and He, Zhanpeng and Zentner, KR and Julian, Ryan and Terry, Jordan K and Woungang, Isaac and others},
  journal={arXiv preprint arXiv:2505.11289},
  year={2025}
}

@article{joshi2025benchmarking,
  title={Benchmarking massively parallelized multi-task reinforcement learning for robotics tasks},
  author={Joshi, Viraj and Xu, Zifan and Liu, Bo and Stone, Peter and Zhang, Amy},
  journal={arXiv preprint arXiv:2507.23172},
  year={2025}
}

@article{Razaviyayn2020NonconvexMO,
  title={Nonconvex Min-Max Optimization: Applications, Challenges, and Recent Theoretical Advances},
  author={Meisam Razaviyayn and Tianjian Huang and Songtao Lu and Maher Nouiehed and Maziar Sanjabi and Mingyi Hong},
  journal={IEEE Signal Processing Magazine},
  year={2020},
  volume={37},
  pages={55-66},
  url={https://api.semanticscholar.org/CorpusID:219687311}
}

@article{duchi2023lecture,
  title={Lecture notes on statistics and information theory},
  author={Duchi, John},
  journal={URL: https://https://web. stanford. edu/class/stats311/lecture-notes. pdf},
  year={2023}
}

@inproceedings{peters2010relative,
  title={Relative entropy policy search},
  author={Peters, Jan and Mulling, Katharina and Altun, Yasemin},
  booktitle={Proceedings of the AAAI Conference on Artificial Intelligence},
  volume={24},
  number={1},
  pages={1607--1612},
  year={2010}
}

@article{abdolmaleki2018maximum,
  title={Maximum a posteriori policy optimisation},
  author={Abdolmaleki, Abbas and Springenberg, Jost Tobias and Tassa, Yuval and Munos, Remi and Heess, Nicolas and Riedmiller, Martin},
  journal={arXiv preprint arXiv:1806.06920},
  year={2018}
}

@article{peng2019advantage,
  title={Advantage-weighted regression: Simple and scalable off-policy reinforcement learning},
  author={Peng, Xue Bin and Kumar, Aviral and Zhang, Grace and Levine, Sergey},
  journal={arXiv preprint arXiv:1910.00177},
  year={2019}
}

@article{kingma2013auto,
  title={Auto-encoding variational bayes},
  author={Kingma, Diederik P and Welling, Max},
  journal={arXiv preprint arXiv:1312.6114},
  year={2013}
}
\bibliographystyle{plainnat}

\newpage
\appendix
\onecolumn

\addcontentsline{toc}{section}{Appendix} 
\part{} 
\parttoc 



\newpage
\section{Derivation of the KL-Regularized Task Distribution}
\label{app:derivation_softmax}


In this section, we derive a closed-form solution to the inner maximization of Eq.~\ref{eq:kl_constrained_optimization} for a fixed $\vtheta$:
\begin{equation}
\label{eq:kl_constrained_objective_app}
\max_{\vq \in \mathcal{Q}} \mathbb{E}_{i \sim \vq} [g_i(\vtheta)] \qquad \text{where} \qquad \mathcal{Q} = \{ \vq \in \Delta^k \mid \KL(\vq || \vp_0) \le \epsilon \}.
\end{equation}
Our derivation follows that of \citet{peters2010relative, abdolmaleki2018maximum, peng2019advantage}.
First, we show that strong duality holds so that we can justifiably optimize the unconstrained Lagrangian dual. Next, we show that the solution to the inner maximization is a weighted softmax distribution.

\paragraph{Showing strong duality. }If $\epsilon=0$, the feasible set reduces to $\vq=\vp_0$ and the solution is trivially $\vp_0$. Henceforth, we assume $\varepsilon > 0$.
We begin by rewriting Eq.~\ref{eq:kl_constrained_objective_app} in primal form:
\begin{align}
\max_{\vq\in\mathbb{R}^k} \quad & \sum_{i=1}^k q_i g_i(\vtheta) \\
\text{s.t.} \quad & \sum_{i=1}^k q_i \log \frac{q_i}{p_{0,i}} \le \epsilon \\
& \sum_{i=1}^k q_i = 1.
\end{align}
The objective is linear (and hence continuous) in $\vq$, and the feasible set $\mathcal{Q}$ is convex and compact. Therefore, the primal problem admits at least one optimal solution. Moreover, since $\vq=\vp_0$ is strictly feasible (\textit{i.e.}, $\KL(\vp_0||\vp_0)=0<\epsilon$), Slater’s condition holds. Hence, strong duality applies and the dual optimum is attained. As a result, it suffices to solve the Lagrange dual and recover the maximizer of the Lagrangian at the optimal dual variables.

\paragraph{Closed-form solution to the inner maximization.} We construct the Lagrangian by introducing a multiplier $\mu \ge 0$ for the $\KL$ inequality constraint and a multiplier $\lambda \in \mathbb{R}$ for the equality constraint:
\begin{equation}
\label{eq:lagrangian_app}
\mathcal{L}(\vq,\mu,\lambda) = \sum_{i=1}^k q_i g_i(\vtheta)
- \mu\left(\sum_{i=1}^k q_i \log\frac{q_i}{p_{0,i}} - \epsilon\right)
+ \lambda\left(\sum_{i=1}^k q_i - 1\right).
\end{equation}
For fixed $\mu>0$ and $\lambda$, the Lagrangian is strictly concave in $\vq$ over the simplex due to the negative entropy term. Hence, the maximizer lies in the interior of the simplex and is characterized by first-order optimality conditions. Taking the derivative with respect to each $q_i$ and setting it to zero yields
\begin{equation}
\label{eq:stationarity_app}
0 = \frac{\partial \mathcal{L}}{\partial q_i} =
g_i(\vtheta) - \mu \left(\log\frac{q_i}{p_{0,i}} + 1\right) + \lambda.
\end{equation}
Solving Eq.~\ref{eq:stationarity_app} for $q_i$ directly gives
\begin{equation}
q_i = p_{0,i}\exp\left(\frac{g_i(\vtheta)+\lambda-\mu}{\mu}\right).
\end{equation}
Since the term $\exp\left(\frac{\lambda-\mu}{\mu}\right)$ is a constant, it can be absorbed into a normalization constant $C>0$:
\begin{equation}
q_i = C p_{0,i}\exp\left(\frac{g_i(\vtheta)}{\mu}\right).
\end{equation}
Define the inverse temperature parameter $\eta = 1/\mu \ge 0$. By enforcing the normalization constraint $\sum_i q_i = 1$, we eliminate the multiplier $\lambda$ and determine $C$, giving the closed-form solution
\begin{equation}
\label{eq:qstar_softmax_app}
q_i^*
=\frac{p_{0,i}\exp\left(\eta g_i(\vtheta)\right)}{\sum_{j=1}^k p_{0,j}\exp\left(\eta g_j(\vtheta)\right)}.
\end{equation}
For $\epsilon > 0$, the optimal $\eta$ is the unique positive value such that $\KL(\vq^* || \vp_0) = \epsilon$, unless $g_i(\boldsymbol{\theta})$ is constant across all $i$, in which case $\vq^* = \vp_0$.
By strong duality (which is guaranteed by Slater’s condition), letting $(\mu^*,\lambda^*)$ be dual-optimal implies that $\vq^*$ in Eq.~\ref{eq:qstar_softmax_app} is also a primal optimal solution to~ Eq.~\ref{eq:kl_constrained_objective_app}. For a uniform base distribution $\vp_0$, the solution reduces to:
\begin{equation}
\label{eq:qstar_softmax_app_unif}
q_i^*
=\frac{\exp\left(\eta g_i(\vtheta)\right)}{\sum_{j=1}^k\exp\left(\eta g_j(\vtheta)\right)}, \qquad \text{or equivalently,} \qquad \vq^* = \text{softmax}(\eta\vg(\vtheta))
\end{equation}

\section{Mirror Ascent Update}
\label{app:mirror_descent}
In this section, we formally describe the DRATS mirror ascent update to the task sampling distribution $\vq$. We additionally show this update can be written as a weighted geometric mean of $\vq_t$ and the best-response $\vq^*$ (Eq.~\ref{eq:qstar_softmax_app_unif}), a form we use in Appendix~\ref{app:convergence} to lower bound the task sampling probabilities $q_{t,i}$. Following \citet{duchi2023lecture}, the general mirror ascent update at round $t$ is
\begin{equation}
\label{eq:mirror_step}
\vq_{t+1} = \argmax_{\vq \in \Delta^k} \left\{ \langle h_t, \vq \rangle - \frac{1}{\alpha} D_\psi(\vq, \vq_t) \right\},
\end{equation}
where $h_t = \nabla_\vq \mathcal{L}(\vq_t)$ is the gradient of the ascent objective $\mathcal{L}(\vq) = \langle \vq, g(\vtheta_t)\rangle - \frac{1}{\eta}\KL(\vq\|\vp_0)$, $\psi$ is a mirror map, and $D_\psi$ is Bregman divergence induced by $\psi$. We use the negative entropy mirror map $\psi(\vq) = \sum_i q_i \log q_i$, which induces $D_\psi(\vq, \vq_t) = \KL(\vq\|\vq_t)$. Algorithm~\ref{alg:mirror_descent} describes the update we implement.

\begin{algorithm}[t]
\caption{DRATS Mirror Ascent Update}
\label{alg:mirror_descent}
\begin{algorithmic}[1]
\STATE \textbf{Input:} Step size $\alpha \in (0, \eta]$, inverse temperature $\eta > 0$.
\FOR{each round $t = 1, \ldots, T$}
    \STATE Compute the gradient:
    \begin{equation}
    \label{eq:q_grad_app}
    h_{t,i} = g_i(\vtheta_t) - \frac{1}{\eta}\left(\log(kq_{t,i}) + 1\right).
    \end{equation}
    \STATE Perform the update:
    \begin{equation}
    \label{eq:eg_generic_revised}
    q_{t+1,i} = \frac{q_{t,i}\exp\left(\alpha h_{t,i}\right)}{\sum_{j=1}^k q_{t,j}\exp\left(\alpha h_{t,j}\right)}.
    \end{equation}
\ENDFOR
\end{algorithmic}
\end{algorithm}

We now show that \eqref{eq:eg_generic_revised} can be written as a weighted geometric mean of $\vq_t$ and $\vq^*$. Substituting \eqref{eq:q_grad_app} into \eqref{eq:eg_generic_revised} and using $p_{0,i} = 1/k$,
\begin{align}
q_{t+1,i} &\propto q_{t,i} \exp\left(\alpha g_i(\vtheta) - \frac{\alpha}{\eta}\log(q_{t,i}) - \frac{\alpha}{\eta}\log (k) - \frac{\alpha}{\eta}\right) \\
&\propto \label{eq:drop_alpha_over_eta} q_{t,i} \exp\left(\alpha g_i(\vtheta) - \frac{\alpha}{\eta}\log(q_{t,i})\right) \\
&=q_{t,i}^{1-\alpha/\eta} \exp\left(\alpha g_i(\vtheta)\right) \\
&=q_{t,i}^{1-\alpha/\eta} \exp\left(\alpha \frac{\eta}{\eta}g_i(\vtheta)\right) \\
&=q_{t,i}^{1-\alpha/\eta} \exp\left(\eta g_i(\vtheta)\right)^{\frac{\alpha}{\eta}} \\
&\propto q_{t,i}^{1-\alpha/\eta}(q_i^*)^{\alpha/\eta}, \label{eq:polyak_beta_eta}
\end{align}
where in Eq.~\ref{eq:drop_alpha_over_eta} we dropped $\exp(\alpha/\eta)$ and $\exp(-\frac{\alpha}{\eta}\log k)$ since both are independent of $i$ and thus absorbed into normalization, and in Eq.~\ref{eq:polyak_beta_eta} we used $q_i^* \propto \exp(\eta g_i(\vtheta))$ (Eq.~\ref{eq:qstar_softmax_app_unif}).
The update is therefore a weighted geometric mean of $q_{t,i}$ and $q_i^*$ when $\eta \in (0, \eta]$, which implies $q_{t+1,i} \geq \min(q_{t,i}, q_i^*)$.

\newpage
\section{Convergence Analysis}
\label{app:convergence}

In this appendix, we prove Theorem~\ref{thm:convergence}. Our proof bounds the regret of each player in the two-player game separately and then combines them. For the $\vq$-player, we apply mirror \textit{descent} to the convex loss $\ell_t(\vq) = -\langle \vq, g(\vtheta_t)\rangle + \frac{1}{\eta}\KL(\vq\|\vp_0)$, which is equivalent to the mirror ascent update derived in Appendix~\ref{app:mirror_descent} since $\ell_t = -\mathcal{L}$. We begin by stating a standard mirror descent regret bound specialized to our setting. Then, we restate and prove our convergence theorem.

\begin{lemma}[Regret of Mirror Descent]
\label{lem:omd_regret}
(Special case of Theorem~18.2.5 in \citet{duchi2023lecture}.) Let $\ell_1, \ldots, \ell_T$ be an arbitrary sequence of convex functions with subgradients $h_t \in \partial \ell_t(\vq_t)$, and let $\vq_1, \ldots, \vq_T$ be generated by mirror descent with mirror map $\psi(\vq) = \sum_i q_i \log q_i$, which is $1$-strongly convex with respect to $\|\cdot\|_1$ and induces Bregman divergence $D_\psi(\vu, \vv) = \KL(\vu\|\vv) = \sum_i u_i \log \frac{u_i}{v_i}$. Then for any $\vu \in \Delta^k$,
\begin{equation}
\label{eq:omd_general_regret_lemma}
\sum_{t=1}^T \big(\ell_t(\vq_t)-\ell_t(\vu)\big) \leq \frac{D_\psi(\vu, \vq_1)}{\alpha} + \frac{\alpha}{2}\sum_{t=1}^T \|h_t\|_\infty^2,
\end{equation}
where $\|\cdot\|_\infty$ is the dual norm of $\|\cdot\|_1$.
\end{lemma}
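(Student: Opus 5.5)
We would prove Lemma~\ref{lem:omd_regret} with the standard one-step analysis of mirror descent: bound the linearized regret $\langle h_t, \vq_t - \vu\rangle$ at each round by a telescoping Bregman term plus a stability term, and then sum over $t$. The update is
\begin{equation*}
\vq_{t+1} = \argmin_{\vq\in\Delta^k}\bigl\{\langle h_t,\vq\rangle + \tfrac{1}{\alpha}\KL(\vq\|\vq_t)\bigr\},
\end{equation*}
which is the descent form of Eq.~\ref{eq:mirror_step} applied to $\ell_t = -\mathcal{L}$. Its closed form is the multiplicative rule of Eq.~\ref{eq:eg_generic_revised}, so every $\vq_t$ has strictly positive entries, given that $\vq_1$ is uniform. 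Hence $\nabla\psi(\vq_t)$ is well defined, and we will only ever evaluate $\nabla\psi$ at iterates, never at the comparator $\vu$.

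\textbf{Step 1 (linearize).} Because $\ell_t$ is convex and $h_t\in\partial\ell_t(\vq_t)$,
\begin{equation*}
\ell_t(\vq_t)-\ell_t(\vu)\le\langle h_t,\vq_t-\vu\rangle .
\end{equation*}
We split the right-hand side as $\langle h_t,\vq_{t+1}-\vu\rangle + \langle h_t,\vq_t-\vq_{t+1}\rangle$.

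\textbf{Step 2 (optimality and the three-point identity).} The first-order optimality condition of the update over the simplex says that, for every $\vu\in\Delta^k$,
\begin{equation*}
\langle \alpha h_t + \nabla\psi(\vq_{t+1})-\nabla\psi(\vq_t),\ \vu-\vq_{t+1}\rangle\ge 0 .
\end{equation*}
For negative entropy, $\nabla\psi(\vq)_i=\log q_i+1$, and the constant $+1$ vanishes against $\vu-\vq_{t+1}$, whose entries sum to zero. The Bregman three-point identity gives
\begin{equation*}
\langle\nabla\psi(\vq_t)-\nabla\psi(\vq_{t+1}),\vu-\vq_{t+1}\rangle = D_\psi(\vu,\vq_{t+1})-D_\psi(\vu,\vq_t)+D_\psi(\vq_{t+1},\vq_t).
\end{equation*}
This identity holds even when $\vu$ has zero coordinates, using the convention $0\log 0=0$. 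Combining the two displays yields
\begin{equation*}
\alpha\langle h_t,\vq_{t+1}-\vu\rangle\le D_\psi(\vu,\vq_t)-D_\psi(\vu,\vq_{t+1})-D_\psi(\vq_{t+1},\vq_t).
\end{equation*}

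\textbf{Step 3 (stability term).} By H\"older's inequality and Young's inequality,
\begin{equation*}
\alpha\langle h_t,\vq_t-\vq_{t+1}\rangle\le\alpha\|h_t\|_\infty\|\vq_t-\vq_{t+1}\|_1\le\tfrac{\alpha^2}{2}\|h_t\|_\infty^2+\tfrac12\|\vq_{t+1}-\vq_t\|_1^2 .
\end{equation*}
Pinsker's inequality, which is exactly the $1$-strong convexity of $\psi$ with respect to $\|\cdot\|_1$, gives $\tfrac12\|\vq_{t+1}-\vq_t\|_1^2\le D_\psi(\vq_{t+1},\vq_t)$. This cancels the negative Bregman term from Step 2.

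\textbf{Step 4 (sum and telescope).} Adding the bounds from Steps 2 and 3, summing over $t=1,\dots,T$, and dividing by $\alpha$ gives
\begin{equation*}
\sum_{t=1}^T\langle h_t,\vq_t-\vu\rangle\le\frac{D_\psi(\vu,\vq_1)-D_\psi(\vu,\vq_{T+1})}{\alpha}+\frac{\alpha}{2}\sum_{t=1}^T\|h_t\|_\infty^2 .
\end{equation*}
Dropping $-D_\psi(\vu,\vq_{T+1})\le 0$ and applying Step 1 yields Eq.~\ref{eq:omd_general_regret_lemma}.

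The main obstacle is Step 2. There we must justify the variational optimality condition for the entropic update on the closed simplex and make sure no $\log 0$ term appears. We would handle this by noting that $\vq_{t+1}$ is interior, so the normal cone of $\Delta^k$ at $\vq_{t+1}$ contains only constant vectors. The condition therefore holds with equality for all $\vu\in\Delta^k$ and can be checked directly from the closed form $\log q_{t+1,i}=\log q_{t,i}-\alpha h_{t,i}-\text{const}$. This also confirms that $\vu$ enters only linearly and through $D_\psi(\vu,\cdot)$, both of which are finite for $\vu$ on the boundary of the simplex.
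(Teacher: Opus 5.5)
Your proof is correct and takes the same approach as the paper. The paper gives no proof of its own; it simply invokes Theorem~18.2.5 of \citet{duchi2023lecture}, and your argument (first-order optimality of the entropic update, the Bregman three-point identity, H\"older--Young with Pinsker to cancel $D_\psi(\vq_{t+1},\vq_t)$, then telescoping) is the standard proof of that result, with your extra care about interior iterates (which needs $\vq_1$ in the relative interior, true for the uniform initialization the paper uses) and boundary comparators $\vu$ filling in details the paper leaves implicit.
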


\begin{theorem}[Convergence of DRATS]
\label{thm:convergence_app}
Suppose $g_i(\vtheta) \in [0, M]$ for all $i \in [k]$ and $\vtheta \in \Theta$, and that the $\vtheta$-player is $C$-low-regret (Definition~\ref{def:low_regret}). For any target accuracy $\epsilon > 0$,  set $\eta = \frac{2\log k}{\epsilon}$ and $\alpha = \frac{\sqrt{2\log k}}{G\sqrt{T}}$ where $G \coloneqq M + \frac{1 + \max(\eta M, \log k)}{\eta} $. Then for any $T \geq \frac{4(G\sqrt{2\log k} + C)^2}{\epsilon^2}$, DRATS satisfies:
\begin{equation}
\label{eq:main_theorem_app}
\max_{i \in [k]} \frac{1}{T}\sum_{t=1}^T g_i(\vtheta_t)
\leq
\min_{\vtheta \in \Theta} \max_{i \in [k]} g_i(\vtheta) + \epsilon.
\end{equation}
\end{theorem}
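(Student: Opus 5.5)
We will prove the theorem by treating it as a two-player zero-sum game and combining the two players' regret bounds. The $\vq$-player runs mirror descent on the convex losses $\ell_t(\vq) = -\langle \vq, g(\vtheta_t)\rangle + \frac{1}{\eta}\KL(\vq\|\vp_0)$. The $\vtheta$-player is $C$-low-regret by Definition~\ref{def:low_regret}. The KL regularizer introduces a bias, and the choice $\eta = 2\log k/\epsilon$ is meant to keep that bias at most $\epsilon/2$. The $O(1/\sqrt{T})$ regret terms then account for the other $\epsilon/2$ once $T$ is large enough.

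\textbf{Step 1 (the $\vq$-player's regret).} Apply Lemma~\ref{lem:omd_regret} with comparator $\vu = e_{i^\star}$, where $i^\star = \argmax_i \sum_t g_i(\vtheta_t)$. Since $\vq_1$ is uniform, $D_\psi(\vu,\vq_1) = \KL(e_{i^\star}\|\vq_1) = \log k$. This gives
\[
\sum_{t}\Bigl(-\langle \vq_t,g(\vtheta_t)\rangle + \tfrac{1}{\eta}\KL(\vq_t\|\vp_0)\Bigr) + \sum_t g_{i^\star}(\vtheta_t) - \tfrac{T}{\eta}\log k \le \frac{\log k}{\alpha} + \frac{\alpha}{2}\sum_t \|h_t\|_\infty^2 .
\]
We drop the nonnegative term $\frac{1}{\eta}\KL(\vq_t\|\vp_0)$ and bound $\|h_t\|_\infty \le G$. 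Then $\alpha = \sqrt{2\log k}/(G\sqrt{T})$ makes the right-hand side at most $G\sqrt{2T\log k}$. Rearranging yields
\[
\max_i \sum_t g_i(\vtheta_t) \le \sum_t \langle \vq_t, g(\vtheta_t)\rangle + \frac{T\log k}{\eta} + G\sqrt{2T\log k}.
\]

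\textbf{Step 2 (the $\vtheta$-player and the minimax value).} By the low-regret property,
\[
\sum_t \langle \vq_t, g(\vtheta_t)\rangle \le \min_{\vtheta}\sum_t\langle \vq_t, g(\vtheta)\rangle + C\sqrt T .
\]
For any fixed $\vtheta$, $\langle \vq_t, g(\vtheta)\rangle \le \max_i g_i(\vtheta)$, so the minimum is at most $T\min_\vtheta\max_i g_i(\vtheta)$. Dividing by $T$ gives
\[
\max_i \frac1T\sum_t g_i(\vtheta_t) \le \min_\vtheta\max_i g_i(\vtheta) + \frac{\log k}{\eta} + \frac{G\sqrt{2\log k}+C}{\sqrt T}.
\]
With $\eta = 2\log k/\epsilon$ the second term equals $\epsilon/2$. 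The condition $T \ge 4(G\sqrt{2\log k}+C)^2/\epsilon^2$ makes the third term at most $\epsilon/2$, which proves the claim.

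\textbf{Main obstacle: bounding $\|h_t\|_\infty$ by $G$.} We have $h_{t,i} = -g_i(\vtheta_t) + \frac1\eta(\log(kq_{t,i})+1)$, which is unbounded if some $q_{t,i}\to 0$. For the upper side, $q_{t,i}\le 1$ gives $\log(kq_{t,i}) \le \log k$. For the lower side, we use the geometric-mean form from Appendix~\ref{app:mirror_descent}: for $\alpha\le\eta$, $q_{t+1,i} \ge \min(q_{t,i}, q_i^*)$. By induction, $q_{t,i} \ge \min_s q^*_{s,i}$. Since the softmax satisfies $q^*_i \ge \frac1k e^{-\eta M}$, we get $\log(kq_{t,i}) \ge -\eta M$. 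Therefore $|\log(kq_{t,i})| \le \max(\eta M,\log k)$, and so $|h_{t,i}| \le M + \frac{1+\max(\eta M,\log k)}{\eta} = G$.

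Two points need checking. First, $\alpha\le\eta$ must hold; this follows for the stated $T$, since $\alpha = O(1/\sqrt T)$. Second, the projection onto $[\varepsilon,1]^k$ only raises small coordinates, so it preserves the lower bound. Alternatively, the analysis can be stated for the unprojected update, since a KL projection onto a convex set is compatible with the mirror descent regret bound.
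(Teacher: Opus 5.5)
Your proposal is correct and follows the paper's proof essentially step for step. It uses the same mirror-descent regret bound against the vertex $e_i$ with the KL term dropped, and the same low-regret comparison of $\frac{1}{T}\sum_t\langle\vq_t,g(\vtheta)\rangle$ with $\min_{\vtheta}\max_i g_i(\vtheta)$. It also uses the same lower bound $q_{t,i}\ge e^{-\eta M}/k$, taken from the geometric-mean form of the update, to get $\|h_t\|_\infty\le G$. Your check that $\alpha\le\eta$ for the stated $T$ is something the paper only asserts, and it does go through. The reason is not just that $\alpha = O(1/\sqrt{T})$: one has $\alpha\le\epsilon/(2G^2)$ and $4G^2\log k\ge\epsilon^2$, which together give $\alpha\le\eta$.

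On the $\varepsilon$-projection, use your second option and analyze the unprojected update, as the paper's footnote does. A KL projection fits into the regret bound through a Pythagorean step that requires the comparator to lie in $\{\vq\ge\varepsilon\}$, and $e_{i^\star}$ does not. Your remark that the projection "only raises small coordinates" is also not quite right. The projection lifts clipped coordinates to $\varepsilon$ but scales all the other coordinates down by a common factor, although the lower bound is still preserved.
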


\begin{proof}
\textbf{Step 1: the $\vq$-player.}
At round $t$, the $\vq$-player minimizes the KL-regularized loss
\begin{equation}
\label{eq:q_loss_t}
\ell_t(\vq) = -\langle \vq, g(\vtheta_t)\rangle + \frac{1}{\eta}\KL(\vq\|\vp_0).
\end{equation}
via mirror descent with the negative entropy mirror map $\psi(\vq) = \sum_i q_i \log q_i$ with Bregman divergence $D_\psi(\vu, \vv) = \KL(\vu||\vv)$, step size $\alpha \in (0, \eta]$, and uniform initialization $\vq_1 = \vp_0$. Since $\ell_t$ is the sum of the linear term $\langle \vq, g(\vtheta_t)\rangle$ and $\frac{1}{\eta}\KL(\vq\|\vp_0)$, which is $1$-strongly convex with respect to $\|\cdot\|_1$ \citep{duchi2023lecture}, $\ell_t$ is convex. Its gradient at interior points of the simplex is
\begin{equation}
\label{eq:q_grad}
h_{t,i} = \frac{\partial \ell_t}{\partial q_i}(\vq_t) = -g_i(\vtheta_t) + \frac{1}{\eta}\big(\log\frac{q_{t,i}}{p_{0,i}}+1\big) = -g_i(\vtheta_t) + \frac{1}{\eta}\big(\log(k q_{t,i})+1\big),
\end{equation}
where we substituted $p_{0,i} = 1/k$. We now upper bound $\|h_t\|_\infty$ by deriving a lower bound on $\log q_{t,i}$.\footnote{Algorithm~\ref{alg:dro} enforces a minimum sampling probability $q_{t,i} \geq q_\text{min}$. However, this step is not required for convergence, so we instead derive the bound directly from the structure of the mirror descent update.} 
Since $\alpha \in (0, \eta]$, the mirror descent update takes the form $q_{t+1,i} \propto q_{t,i}^{1-\alpha/\eta}(q_i^*)^{\alpha/\eta}$ (Eq.~\ref{eq:polyak_beta_eta}), a weighted geometric mean of $q_{t,i}$ and $q_i^*$. 
Therefore, $q_{t+1,i} \geq \min(q_{t,i}, q_i^*) \geq \frac{e^{-\eta M}}{k}$ for all $t, i$, where the last inequality uses $g_i(\vtheta) \in [0, M]$. It follows that $\log(kq_{t,i}) \in [-\eta M, \log k]$, so
\begin{equation}
|h_{t,i}| \leq M + \frac{1 + \max(\eta M, \log k)}{\eta} =: G \qquad \implies \qquad \|h_t\|_\infty \leq G \quad \forall\, t
\end{equation}
\textit{i.e.}, $\ell_t$ is $G$-Lipschitz with respect to $\|\cdot\|_1$. Thus, we can  apply Lemma~\ref{lem:omd_regret}: For any $\vu \in \Delta^k$:
\begin{equation}
\label{eq:omd_general_regret}
\begin{split}
    \sum_{t=1}^T \big(\ell_t(\vq_t)-\ell_t(\vu)\big) 
    &\leq \frac{\KL(\vu\|\vq_1)}{\alpha} + \frac{\alpha}{2}\sum_{t=1}^T \|h_t\|_\infty^2 \\
    &\leq \frac{\log k}{\alpha} + \frac{\alpha G^2 T}{2},
\end{split}
\end{equation}
where we substituted $\KL(\vu\|\vq_1) \leq \log k$ (since $\vq_1$ is uniform) and $\|h_t\|_\infty^2 \leq G^2$. 
Setting $\vu = e_i$ and expanding $\ell_t(e_i) = -g_i(\vtheta_t) + \frac{\log k}{\eta}$, rearranging \eqref{eq:omd_general_regret} gives
\begin{equation}
\sum_{t=1}^T g_i(\vtheta_t) \leq \sum_{t=1}^T -\ell_t(\vq_t) + \frac{T\log k}{\eta} + \frac{\log k}{\alpha} + \frac{\alpha G^2 T}{2}.
\end{equation}
Substituting $-\ell_t(\vq_t) \leq \langle \vq_t, g(\vtheta_t)\rangle$ and dividing by $T$:
\begin{equation}
\frac{1}{T}\sum_{t=1}^T g_i(\vtheta_t) \leq \frac{1}{T}\sum_{t=1}^T \langle \vq_t, g(\vtheta_t)\rangle + \frac{\log k}{\eta} + \frac{\log k}{\alpha T} + \frac{\alpha G^2}{2}.
\end{equation}
Since this bound holds for all $i \in [k]$ and the right-hand side is independent of $i$, the inequality is preserved when taking $\max_{i\in[k]}$ on the left:
\begin{equation}
\label{eq:step1_final}
\max_{i \in [k]} \frac{1}{T}\sum_{t=1}^T g_i(\vtheta_t) \leq \frac{1}{T}\sum_{t=1}^T \langle \vq_t, g(\vtheta_t)\rangle + \frac{\log k}{\eta} + \frac{\log k}{\alpha T} + \frac{\alpha G^2}{2}.
\end{equation}

\textbf{Step 2: the $\vtheta$-player.}
Since $\frac{1}{\eta}\KL(\vq_t\|\vp_0)$ does not depend on $\vtheta$, the $\vtheta$-player's effective loss at round $t$ is $\langle \vq_t, g(\vtheta)\rangle$. Thus, we can apply our $C$-low-regret assumption (Definition~\ref{def:low_regret}). After dividing by $T$, we have:
\begin{equation}
\label{eq:theta_regret}
\frac{1}{T}\sum_{t=1}^T \langle \vq_t, g(\vtheta_t)\rangle \leq \min_{\vtheta \in \Theta} \frac{1}{T}\sum_{t=1}^T \langle \vq_t, g(\vtheta)\rangle + \frac{C}{\sqrt{T}}.
\end{equation}
Since $\vq_t \in \Delta^k$, $\langle \vq_t, g(\vtheta)\rangle$ is a convex combination of $\{g_i(\vtheta)\}_{i=1}^k$, we have $\langle \vq_t, g(\vtheta)\rangle \leq \max_{i \in [k]} g_i(\vtheta)$. Averaging over $t=1,\dots,T$ and taking $\min_{\vtheta \in \Theta}$ on both sides yields:
\begin{equation}
\label{eq:theta_comparator}
\min_{\vtheta \in \Theta} \frac{1}{T}\sum_{t=1}^T \langle \vq_t, g(\vtheta)\rangle \leq \min_{\vtheta \in \Theta} \max_{i \in [k]} g_i(\vtheta).
\end{equation}
Substituting \eqref{eq:theta_comparator} into \eqref{eq:theta_regret}:
\begin{equation}
\label{eq:step2_final}
\frac{1}{T}\sum_{t=1}^T \langle \vq_t, g(\vtheta_t)\rangle \leq \min_{\vtheta \in \Theta} \max_{i \in [k]} g_i(\vtheta) + \frac{C}{\sqrt{T}}.
\end{equation}

\textbf{Step 3: Combining the bounds.}
Substituting \eqref{eq:step2_final} into \eqref{eq:step1_final}:
\begin{equation}
\label{eq:combined}
\max_{i \in [k]} \frac{1}{T}\sum_{t=1}^T g_i(\vtheta_t) \leq \min_{\vtheta \in \Theta} \max_{i \in [k]} g_i(\vtheta) + \frac{\log k}{\eta} + \frac{\log k}{\alpha T} + \frac{\alpha G^2}{2} + \frac{C}{\sqrt{T}}.
\end{equation}
To balance the two $\alpha$-dependent terms, we choose $\alpha^* = \frac{\sqrt{2 \log k}}{G\sqrt{T}}$ so that $\frac{\log k}{\alpha T} = \frac{\alpha G^2}{2}$. Substituting into \eqref{eq:combined}:
\begin{equation}
\label{eq:after_alpha}
\max_{i \in [k]} \frac{1}{T}\sum_{t=1}^T g_i(\vtheta_t) \leq \min_{\vtheta \in \Theta} \max_{i \in [k]} g_i(\vtheta) + \frac{\log k}{\eta} + \frac{G\sqrt{2\log k} + C}{\sqrt{T}}.
\end{equation}
Setting $\eta = \frac{2\log k}{\epsilon}$ reduces the bias term to $\frac{\epsilon}{2}$. The $O(1/\sqrt{T})$ term reaches $\frac{\epsilon}{2}$ for $T \geq \frac{4(G\sqrt{2\log k} + C)^2}{\epsilon^2}$. Thus, for any $T$ satisfying this condition:
\begin{equation}
\label{eq:final_bound}
\max_{i \in [k]} \frac{1}{T}\sum_{t=1}^T g_i(\vtheta_t) \leq \min_{\vtheta \in \Theta} \max_{i \in [k]} g_i(\vtheta) + \epsilon. \qed
\end{equation}
\end{proof}

\newpage
\section{Baselines}

\label{app:baselines_modifications}

In this section, we describe minor modifications we make to the original implementations of baselines.

\paragraph{\textsc{Hard First} (SMT).} We make three modifications to SMT.
\begin{enumerate}
    \item \textbf{No Resets.} SMT~\citep{cho2024hard} periodically resets network parameters (without resetting the replay buffer) to mitigate primacy bias~\citep{nikishin2022primacy}. We do not implement network resets for two reasons.
First, resets are designed for off-policy learning: after a reset, the agent can quickly relearn from its replay buffer using an off-policy algorithm. In on-policy learning, there is no replay buffer---on-policy algorithms discard data between updates because off-policy historic data biases gradient updates---so resetting network parameters is equivalent to restarting training entirely.
Second, network resets are orthogonal to task sampling and could in principle be applied on top of any method (in an off-policy settings); including them for only SMT would confound the task-sampling comparison we aim to study.
\item \textbf{Task-Specific Thresholds.} We use task-specific thresholds $M_i$ since tasks achieve different returns when solved, and using a single shared threshold might cause SMT to declare a task solved prematurely. We set each threshold by examining the training curves of DRATS to give SMT the best possible chance of success, and list all values in Table~\ref{tab:smt_thresholds}.
\item \textbf{Tasks can move between ``solved'' and ``active'' pools.} In the original SMT, once a task is declared solved or unsolvable, it is removed from the active pool and not resampled until the second stage. This is reasonable for off-policy learning, where the replay buffer retains data from inactive tasks and prevents forgetting. In on-policy learning, however, removing a task from the active pool causes the agent to forget it. We therefore allow tasks to move freely between the solved and active pools: if a solved task's return later drops below $M_i$, it is returned to the active pool.
\end{enumerate}

\paragraph{\textsc{Learning Progress} (ALP-GMM).}
ALP-GMM~\citep{portelas2020teacher} was originally proposed for continuous task spaces, where a Gaussian Mixture Model (GMM) is used to estimate absolute learning progress (ALP) across the task parameter space and bias sampling toward high-ALP regions.
In our setting, tasks are discrete, so the GMM is unnecessary: we directly track the absolute learning progress of each task as $z_i = |\widehat{J}_{t,i} - \widehat{J}_{t-1,i}|$ and update the task distribution $\vq$ via mirror ascent with $z_i$ in place of $g_i$.

\paragraph{\textsc{Learning Potential} (PLR).}
PLR~\citep{jiang2021prioritized} originally  constructs a task distribution by applying a rank-based prioritization function to the learning potential scores before normalizing into a distribution.
In our setting, we update the task distribution $\vq$ via mirror ascent with $z_i$ in place of $g_i$ to be consistent with how we update DRATS and \textsc{Learning Progress}, ensuring that differences in performance reflect differences in prioritization strategy rather than differences in how scores are translated into sampling distributions.

\section{Additional Experimental Results}
\label{app:experiments}

\begin{figure}
    \centering
    \begin{subfigure}{\linewidth}
        \includegraphics[width=\linewidth]{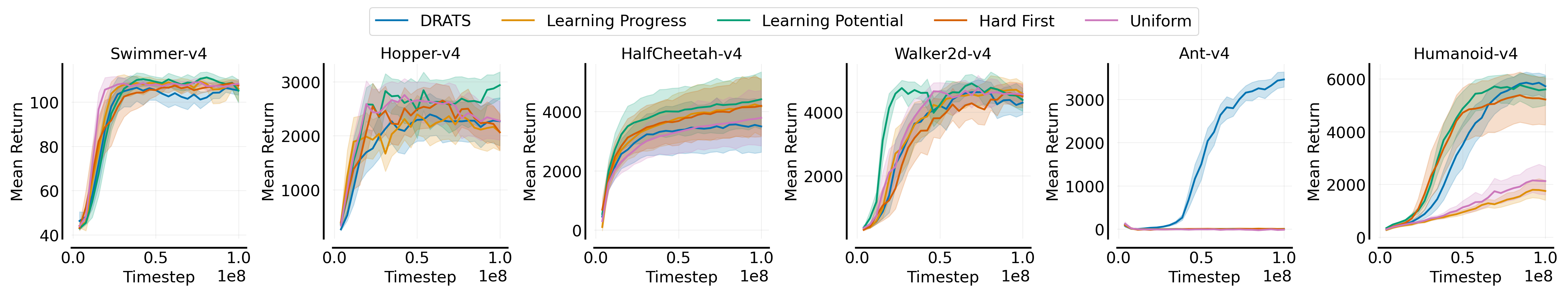}
    \caption{Mean Return in each MuJoCo6 task.}
            \vspace{0.5em}
        \label{fig:mujoco_task_returns}
    \end{subfigure}
    \begin{subfigure}{\linewidth}
        \includegraphics[width=\linewidth]{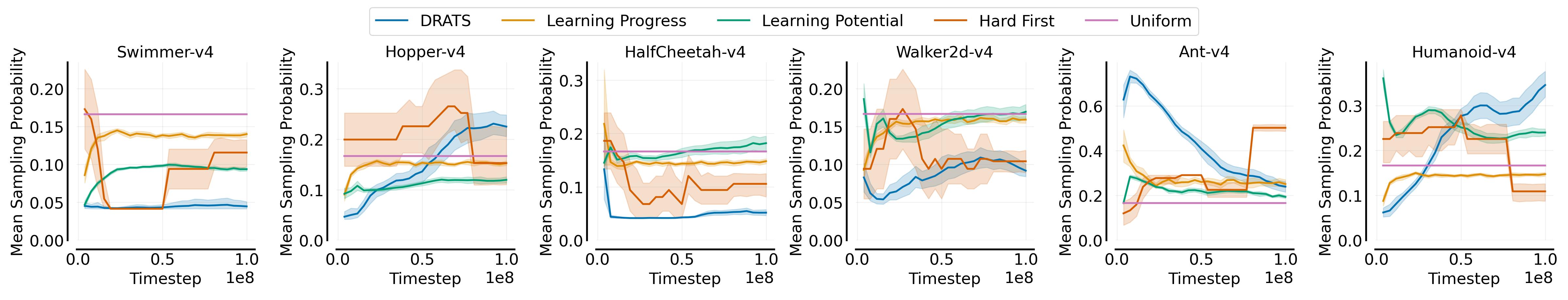}
    \caption{Task sampling probabilities in each MuJoCo6 task.}
            \label{fig:mujoco_task_probs}
    \end{subfigure}
    \caption{Mean return and sampling probability in each MuJoCo6 task. Solid curves denote the mean over 20 seeds and shaded regions denote 95\% bootstrap confidence intervals. 
    }
    \label{fig:mujoco_task_results}
\end{figure}

\begin{figure}
    \centering
    \begin{subfigure}{\linewidth}
        \includegraphics[width=\linewidth]{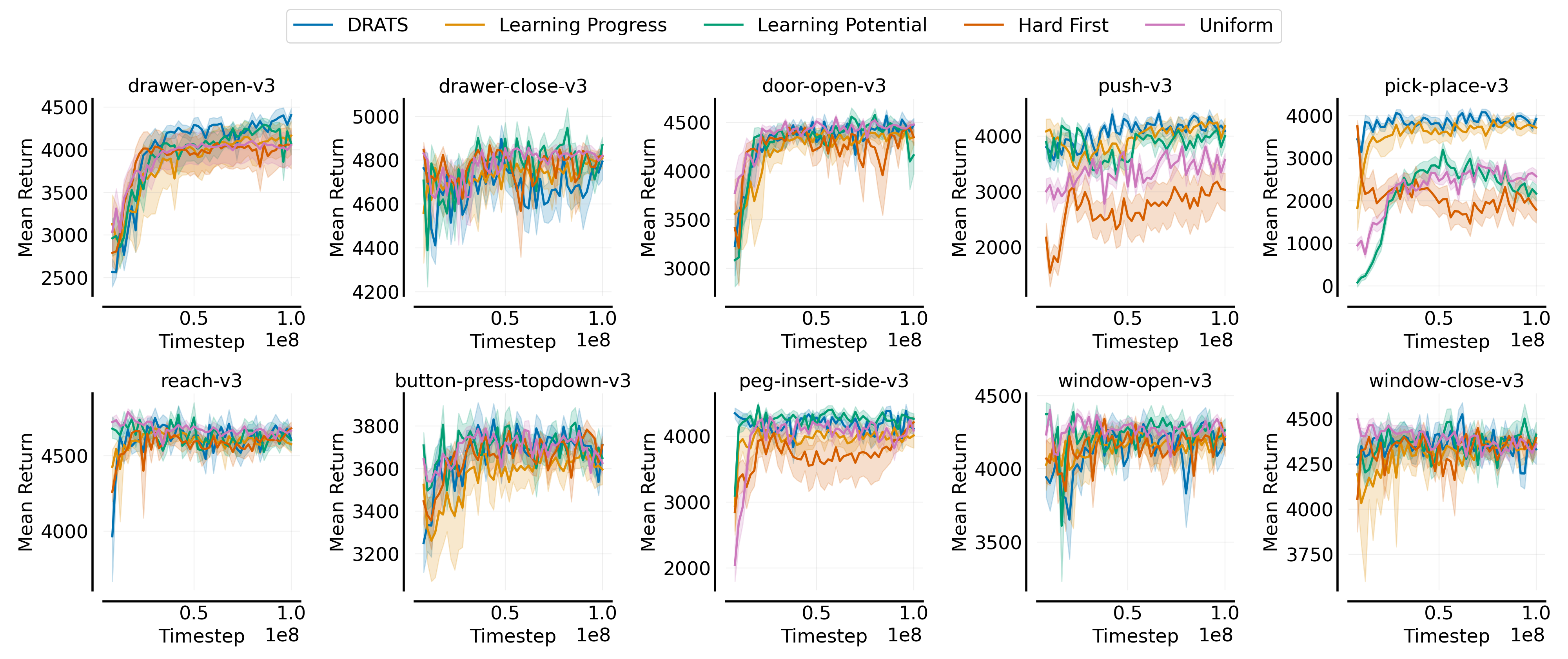}
        \caption{Mean Return in each MT10 task.}
        \label{fig:mt10_return_tasks}
        \vspace{0.5em}
    \end{subfigure}
    \begin{subfigure}{\linewidth}
        \includegraphics[width=\linewidth]{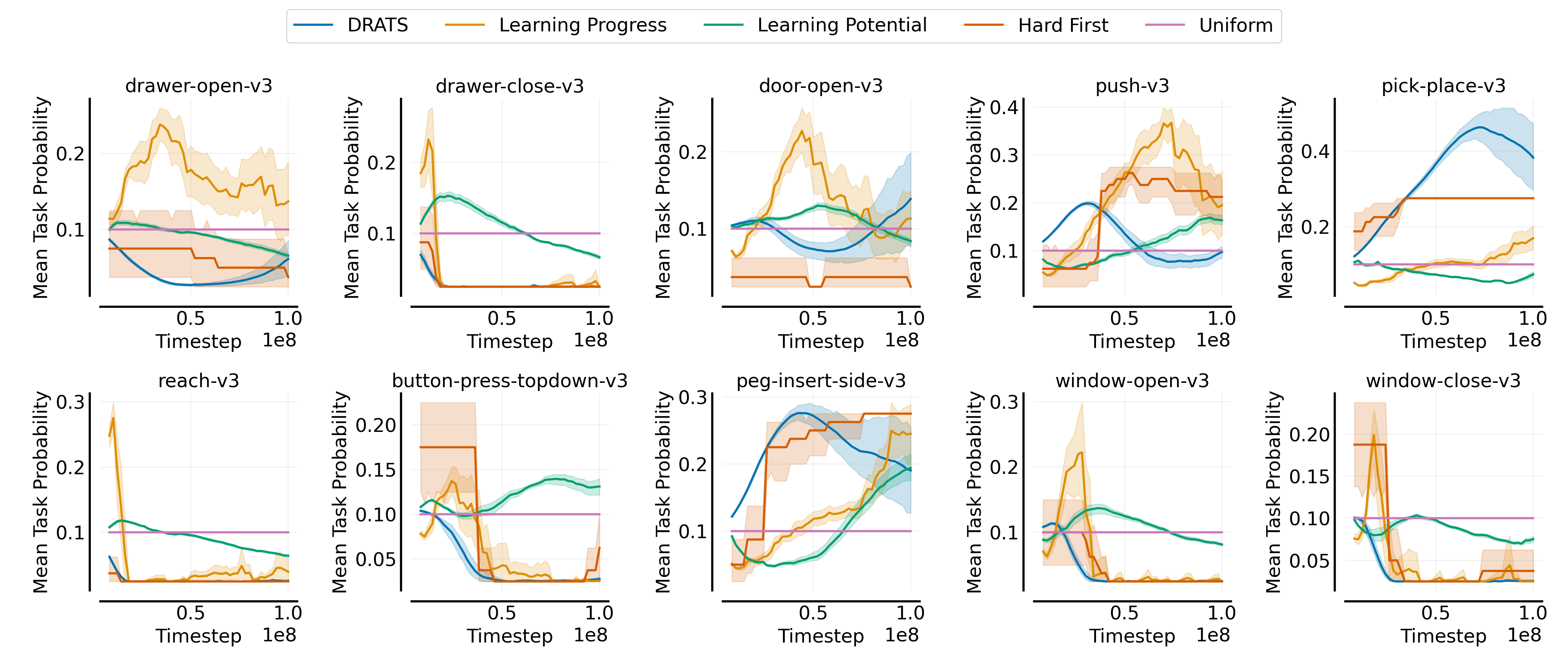}
    \caption{Task sampling probabilities in each MT10 task.}
        \label{fig:mt10_task_probs}
    \end{subfigure}
    \caption{Mean return and sampling probability in each MT10 task. Solid curves denote the mean over 20 seeds and shaded regions denote 95\% bootstrap confidence intervals. 
    }
    \label{fig:mt10_task_results}
\end{figure}

\begin{figure}
    \centering
    \includegraphics[width=\linewidth]{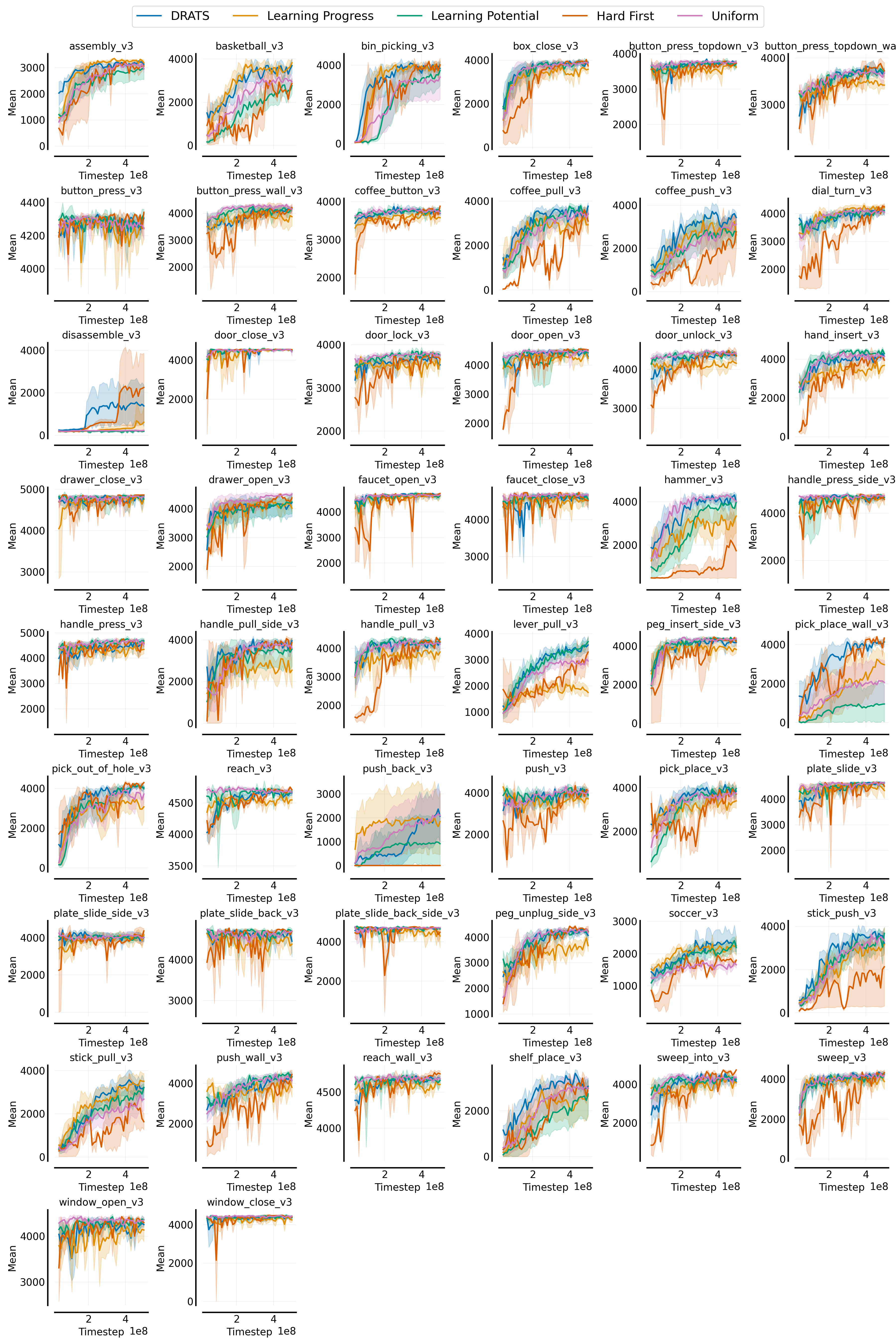}
    \caption{Task returns in MT50 tasks. Solid curves denote the mean over 10 seeds and shaded regions denote 95\% bootstrap confidence intervals.}
    \label{fig:mt50_task_returns}
\end{figure}

\begin{figure}
    \centering
    \includegraphics[width=\linewidth]{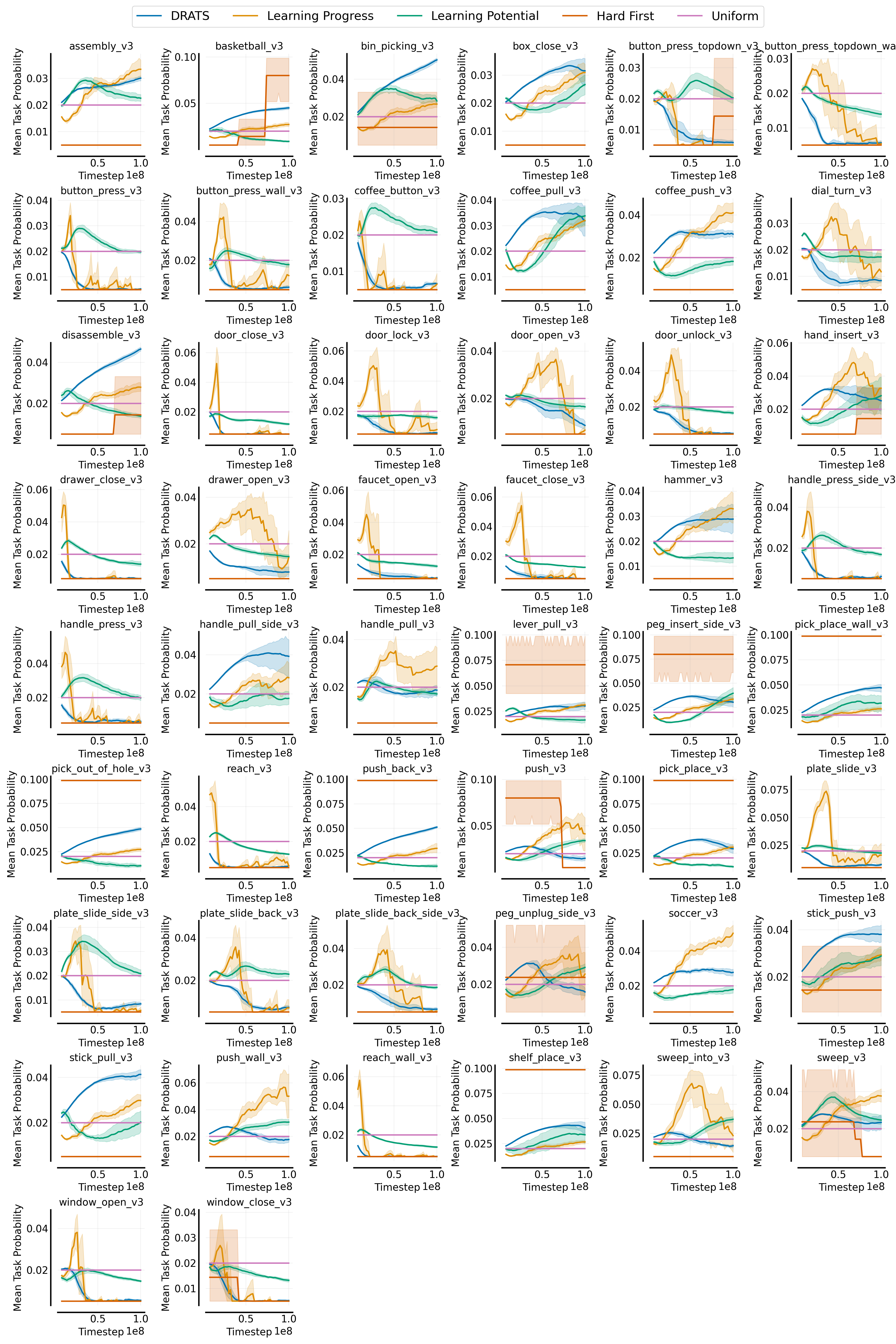}
    \caption{Task sampling probabilities in each MT50 task. Solid curves denote the mean over 10 seeds and shaded regions denote 95\% bootstrap confidence intervals. }
    \label{fig:mt50_task_probs}
\end{figure}

\begin{figure}
    \centering
    \begin{subfigure}{\linewidth}
        \includegraphics[width=\linewidth]{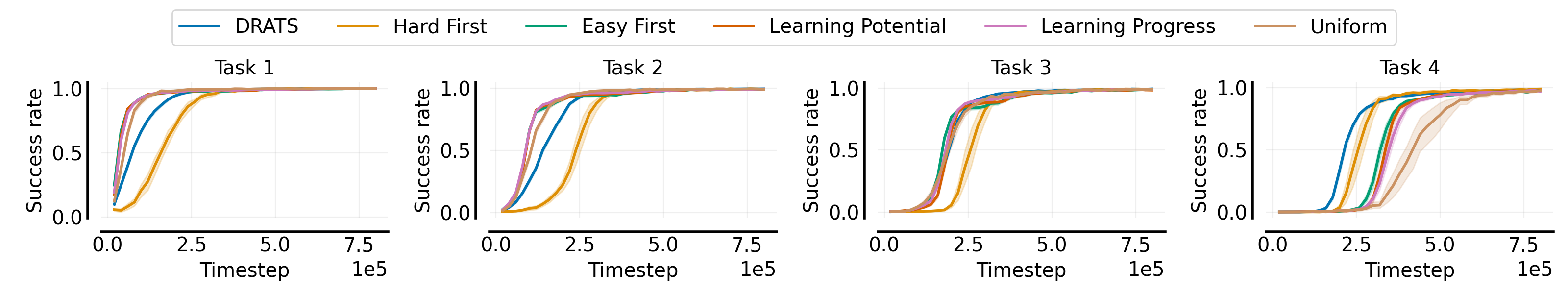}
        \caption{Mean success rate on each task (50 seeds)}
        \label{fig:gridworld_tasks}
    \end{subfigure}
        \begin{subfigure}{\linewidth}
        \includegraphics[width=\linewidth]{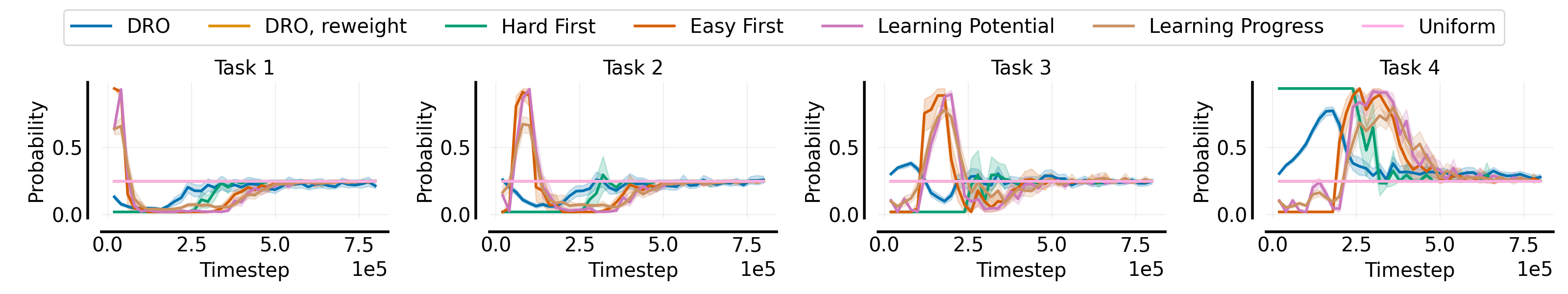}
        \caption{Task sampling probabilities (50 seeds)}
        \label{fig:gridworld_probabilities}
    \end{subfigure}
    \caption{
    \textbf{(a)} . 
    Mean success rates and task sampling probability in each Gridworld task with shared actor/critic networks
    (50 seeds). Shaded regions denote 95\% bootstrap confidence intervals.
    }
\end{figure}

\begin{figure}
    \centering
    \includegraphics[width=0.5\linewidth]{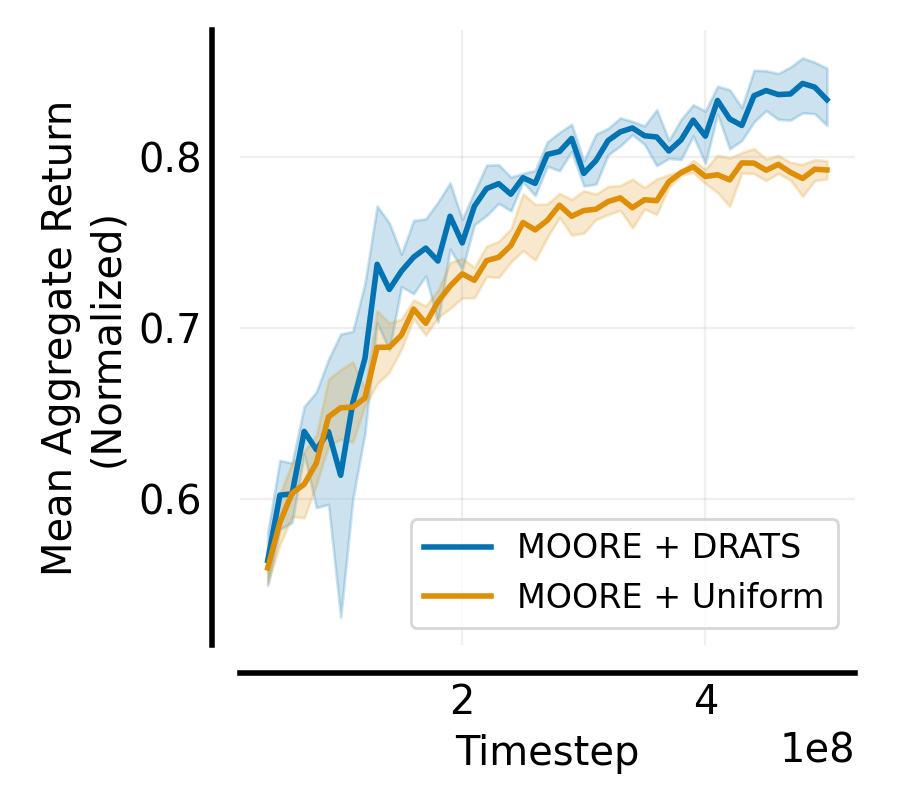}
    \caption{Combing DRATS with MOORE on MT50 (5 seeds). Shaded regions denote 95\% bootstrap confidence intervals.}
    \label{fig:mt50_moore}
\end{figure}

\begin{figure}
    \centering
    \begin{subfigure}{0.45\linewidth}
        \includegraphics[width=\linewidth]{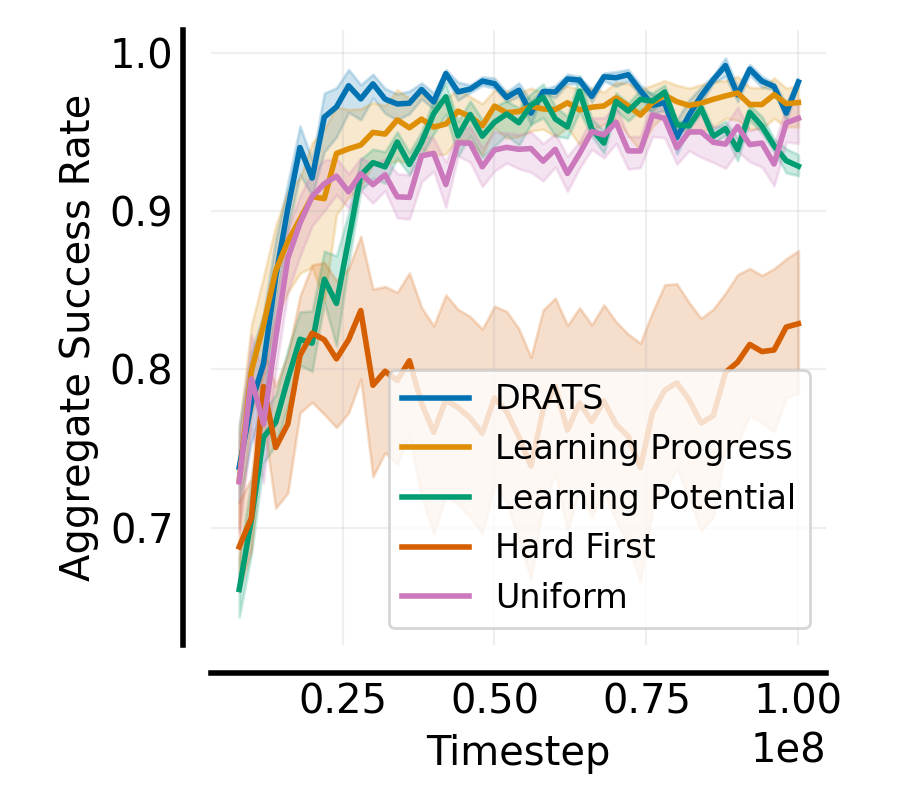}
        \caption{MT10 (20 seeds)}
    \end{subfigure}
    \hfill
        \begin{subfigure}{0.45\linewidth}
        \includegraphics[width=\linewidth]{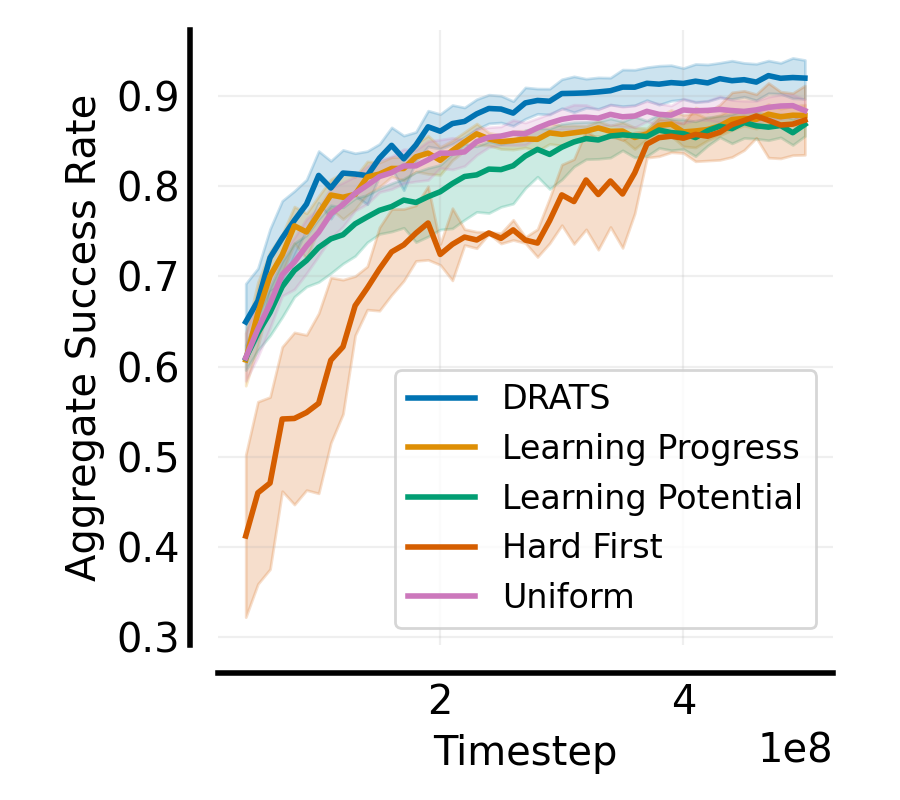}
        \caption{MT50 (10 seeds)}
    \end{subfigure}
    \caption{
    Mean aggregate success rates in MT10 and MT50. Shaded regions denote 95\% bootstrap confidence intervals.
    }
    \label{fig:mt_success_rate}
\end{figure}

\begin{figure}
    \centering
    \includegraphics[width=0.5\linewidth]{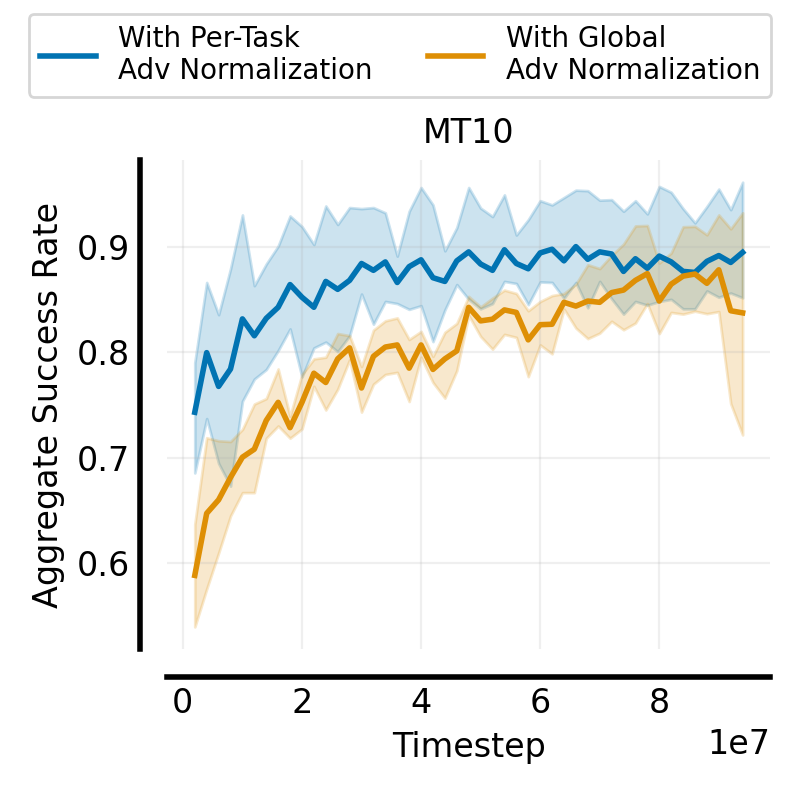}
    \caption{Aggregate success rate of DRATS in MT10 with per-task and global advantage normalization. These training curves use a smaller learning rate of 3e-4 (the default value in the \texttt{metaworld-algorithms} codebase~\citep{mclean2025meta}) whereas all other MT10 experiments use 5e-4.}
    \label{fig:adv_norm}
\end{figure}

\newpage
\section{Ablations}
\label{app:ablations}

\begin{figure}
    \centering
    \begin{subfigure}{0.45\linewidth}
    \includegraphics[width=\linewidth]{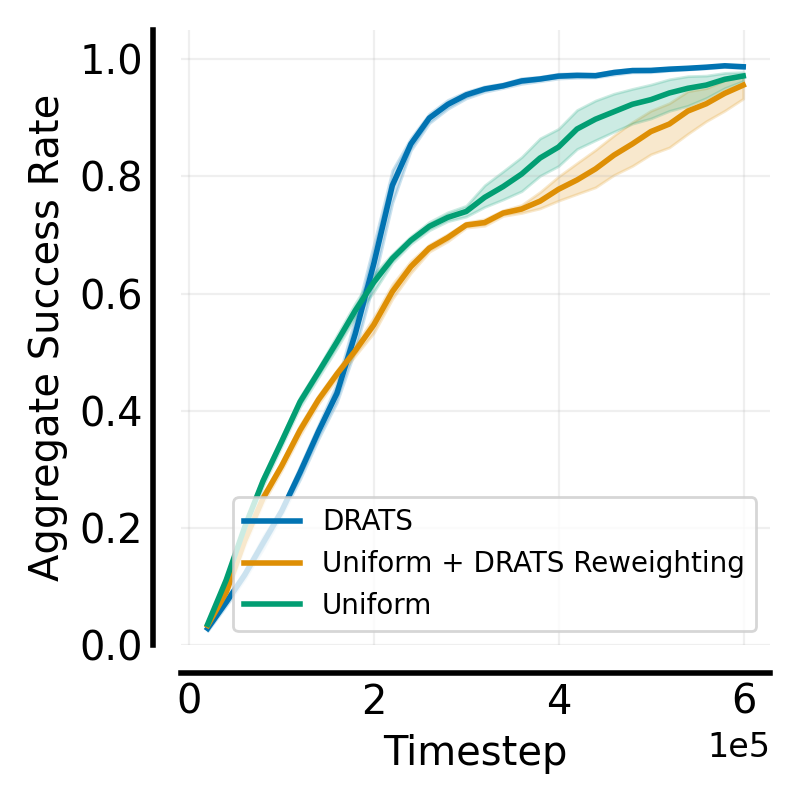}
    \caption{Aggregate success rate}
    \label{fig:gridworld_results_reweight}
    \end{subfigure}
    \hfill
    \begin{subfigure}{0.45\linewidth}
    \includegraphics[width=\linewidth]{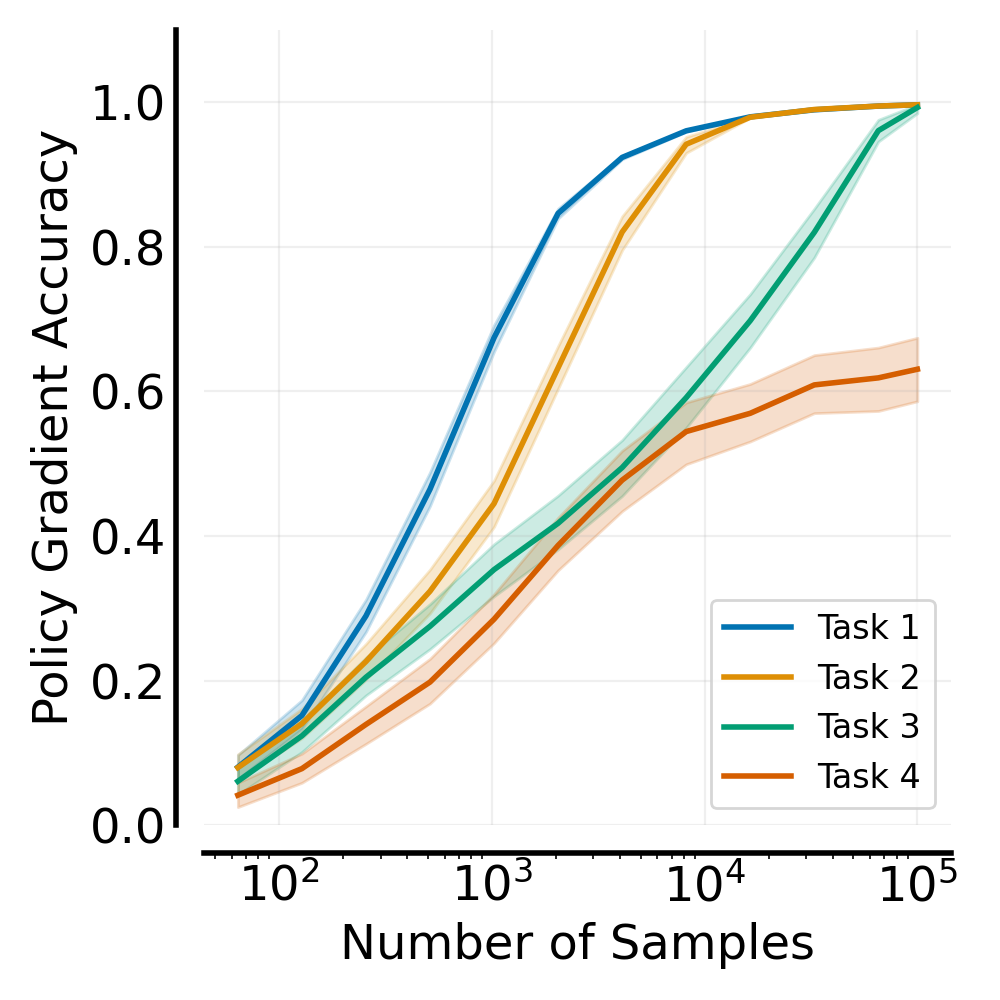}
    \caption{Gradient accuracy under a uniform policy}
    \label{fig:gridworld_accuracy}
    \end{subfigure} \\
    \begin{subfigure}{\linewidth}
    \includegraphics[width=\linewidth]{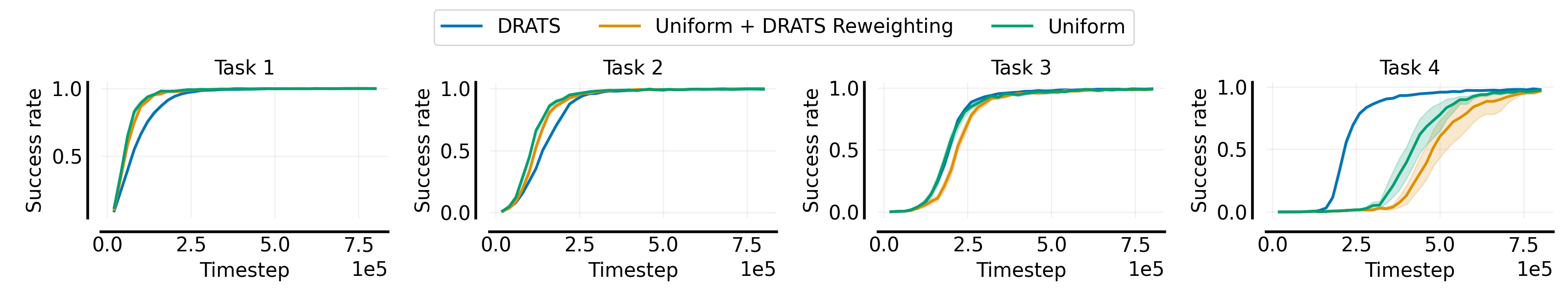}
    \caption{Success rate in each task.}
    \label{fig:gridworld_gradient_accuracy}
    \end{subfigure} 
    \caption{
    Adaptive sampling vs. Objective reweighting on Gridworld (50 seeds). Shaded regions denote 95\% bootstrap confidence belts.
    }
\end{figure}

\subsection{Adaptive Sampling vs. Objective Reweighting}
\label{app:ablations_reweight}

In this section, we compare two ways of optimizing the DRATS objective : adaptive resampling of tasks from $\vq$ (DRATS as described in the main paper), and reweighting the learning objective by $q_i$ while sampling tasks uniformly.

The minimax objective in Eq.~\ref{eq:feas_minimax_sampling} can be optimized in two ways: by resampling tasks from $\vq$ or by sampling tasks uniformly and reweighting each task's loss by $q_i$ . Both are equivalent in expectation, since the expectation in Eq.~\ref{eq:feas_minimax_sampling} can be written as task reweighting: $\min_{\vtheta}\max_{\vq \in \Delta^k} \sum_{i=1}^k q_i g_i(\vtheta)$. Thus, DRATS can be implemented either by drawing tasks from $\vq$ or by reweighting the returns of a uniformly sampled batch by $q_i$ (\textsc{Uniform} + DRATS Reweighting). However, reweighting is known to produce higher-variance gradient estimates than resampling~\citep{an2020resampling}.
As shown in Fig.~\ref{fig:gridworld_results_reweight}, \textsc{Uniform} + DRATS Reweighting is much less data-efficient than DRATS: despite up-weighting hard tasks in the objective, it under-allocates interactions to them and continues sampling easy tasks even after they are solved.
Notably, \textsc{Uniform} + DRATS Reweighting is even less data-efficient than \textsc{Uniform}: by down-weighting easy tasks in the objective, it solves them more slowly than \textsc{Uniform} and therefore benefits less from positive transfer.

Fig.~\ref{fig:gridworld_gradient_accuracy} provides intuition for why resampling yields more data efficient learning than reweighting by plotting the accuracy of policy gradient estimates as a function of sample size, measured as the cosine similarity between a Monte Carlo estimate and the true policy gradient.
Harder tasks require more samples to obtain accurate gradient estimates: achieving a cosine similarity of $0.2$ across all four tasks requires data from each task in a $1:1.5:2:4$ ratio.
Objective reweighting samples tasks uniformly, so gradient estimates are more accurate for easy tasks and less accurate for hard ones.
DRATS with adaptive sampling allocates more data to hard tasks, improving gradient accuracy precisely where it is most needed.

\label{sec:ablations}



\begin{figure}[t]
    \centering
    \begin{subfigure}{0.48\textwidth}
        \centering
        \includegraphics[width=\linewidth]{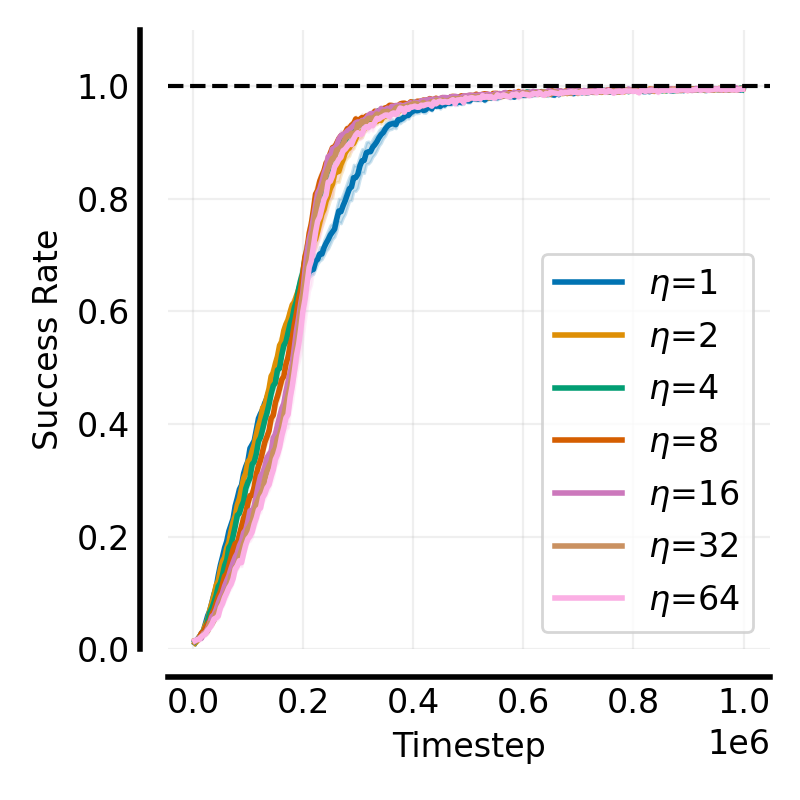}
        \caption{Average Success Rate}
    \label{fig:ablation_drats_eta}
    \end{subfigure}
    \hfill
    \begin{subfigure}{0.48\textwidth}
        \centering
        \includegraphics[width=\linewidth]{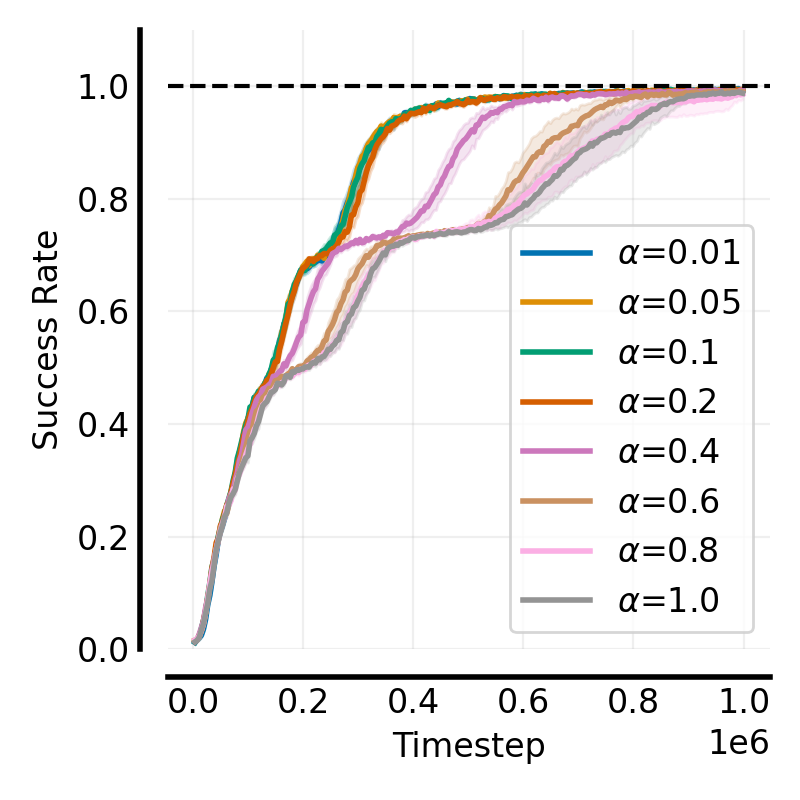}
        \caption{Average Success Rate}
    \label{fig:ablation_drats_alpha}
    \end{subfigure}
    \caption{DRATS ablation on $\eta$ and $\alpha$ (50 seeds). Shaded regions denote 95\% bootstrap confidence intervals.}
\end{figure}

\subsection{DRATS Hyperparameter Ablations}

In this section, we ablate the two core hyperparameters of DRATS: the inverse temperature $\eta$, which controls the sharpness of the task sampling distribution, and the mirror ascent step size $\alpha$, which controls how much the task distribution changes between updates.
%
%
We vary one parameter at a time while keeping the other fixed at its base value, evaluating the following values:
\begin{itemize}
    \item \textbf{Temperature:} $\eta \in \{1, 2, 4, 8, 16, 32, 64\}$
    \item \textbf{Step size:} $\alpha = c\cdot\eta$ for $c \in \{0.01, 0.05, 0.1, 0.2, 0.4, 0.6, 0.8, 1.0\}$
\end{itemize}
A step size of $\alpha = \eta$ (\textit{i.e.}, $c= 1$) corresponds to setting the task distribution to the ``best response'' $\vq^\star$ (Eq.~\ref{eq:softmax_q}), so $c=1$ denotes the maximum valid step size. A step size $\alpha < \eta$ corresponds to take a step toward $\vq^*$.
We note that our hyperparameter sweep on MuJoCo6 in Fig.~\ref{fig:mujoco_hyperparameter_sweep} also ablates $\eta$ for DRATS, \textsc{Learning Progress}, and \textsc{Learning Potential}.

\paragraph{Inverse Temperature ($\eta$).} Fig.~\ref{fig:ablation_drats_eta} shows that DRATS is most data efficient with $\eta = 8$. Smaller values keep the task distribution closer to uniform, reducing the degree to which DRATS prioritizes hard tasks. Larger values concentrate too much probability on the hardest task, which harms data efficiency on easier tasks. Nevertheless, learning curves are nearly identical for $\eta \in [2, 16]$, indicating robustness to this hyperparameter.

\paragraph{Mirror Ascent Step Size ($\alpha$).}
Fig.~\ref{fig:ablation_drats_alpha} shows that data efficiency improves monotonically as $\alpha$ decreases, with optimal performance at $\alpha = 0.01$.
Large step sizes cause the task distribution to shift dramatically between updates, especially when a small number of trajectories are collected between updates. In particular, solved tasks are assigned low sampling probability under DRATS, so if the batch size is small, it may not contain any trajectories from these tasks. In such case, their return estimate defaults to zero and their return gaps thus appears large, causing a large step size to shift substantial probability back onto it despite the task being solved.
Large step sizes cause the task distribution to shift dramatically between updates, which is especially problematic when only a few trajectories are collected per update. In particular, solved tasks are assigned low sampling probability under DRATS, so with a small batch size, the batch may contain no trajectories from these tasks; their return estimates then default to zero, making their return gaps appear large, ultimately causing a large step size to shift substantial probability back onto them despite the tasks being solved.
Smaller step sizes mitigate this issue by taking gradual steps toward $\vq^*$, preventing noisy gap estimates from causing large swings in the sampling distribution.
We note that in our MuJoCo6, MT10, and MT50 experiments, we collect several hundred trajectories between updates, so the DRATS sampling distribution evolves smoothly throughout training even with a larger step size of $\alpha = 0.5 \cdot \eta$ (Figs.~\ref{fig:mujoco_task_probs},~\ref{fig:mt10_task_probs}, and~\ref{fig:mt50_task_probs}).

\section{Hyperparameters}
\label{app:hyperparameters}

\begin{figure}
    \centering
    \begin{subfigure}{0.45\linewidth}
        \includegraphics[width=\linewidth]{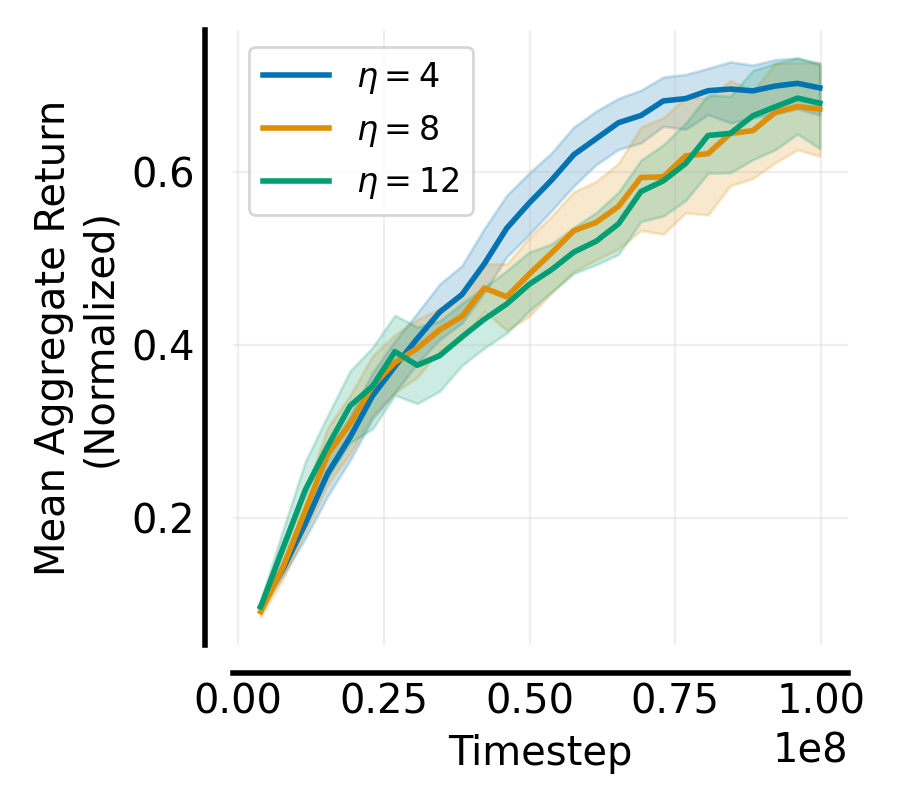}
    \caption{DRATS $\eta$ sweep.}
    \label{fig:dro_eta}
    \end{subfigure}
    \hfill
    \begin{subfigure}{0.45\linewidth}
        \includegraphics[width=\linewidth]{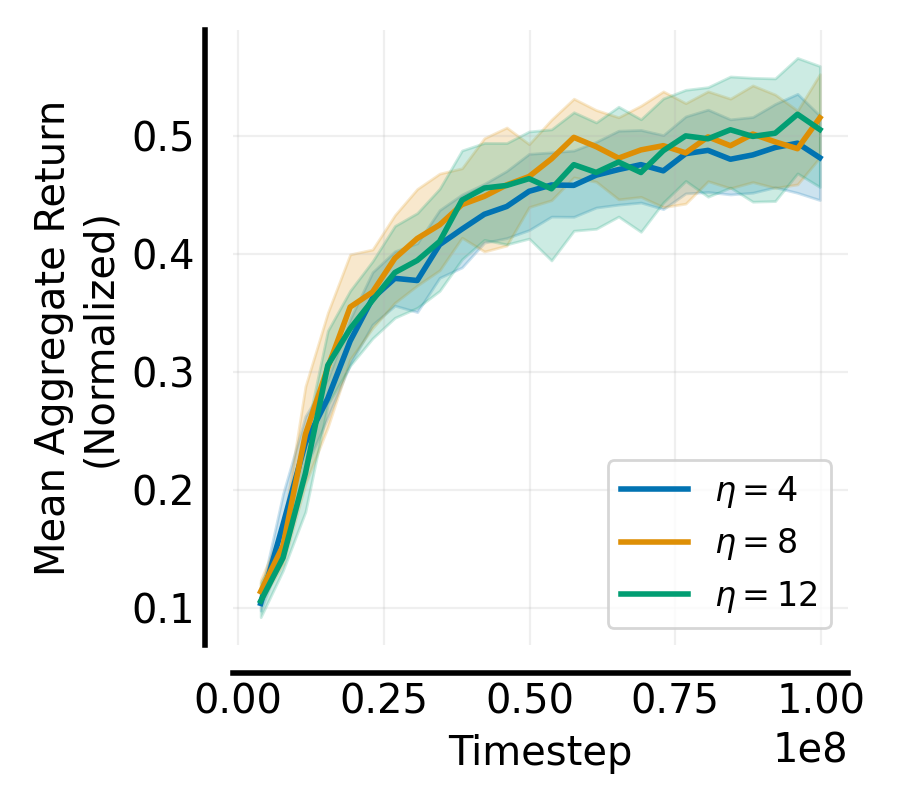}
    \caption{\textsc{Learning Progress} $\eta$ sweep.}
    \label{fig:learning_progress_era}
    \end{subfigure}
    \vspace{0.5em}
    \begin{subfigure}{0.45\linewidth}
        \includegraphics[width=\linewidth]{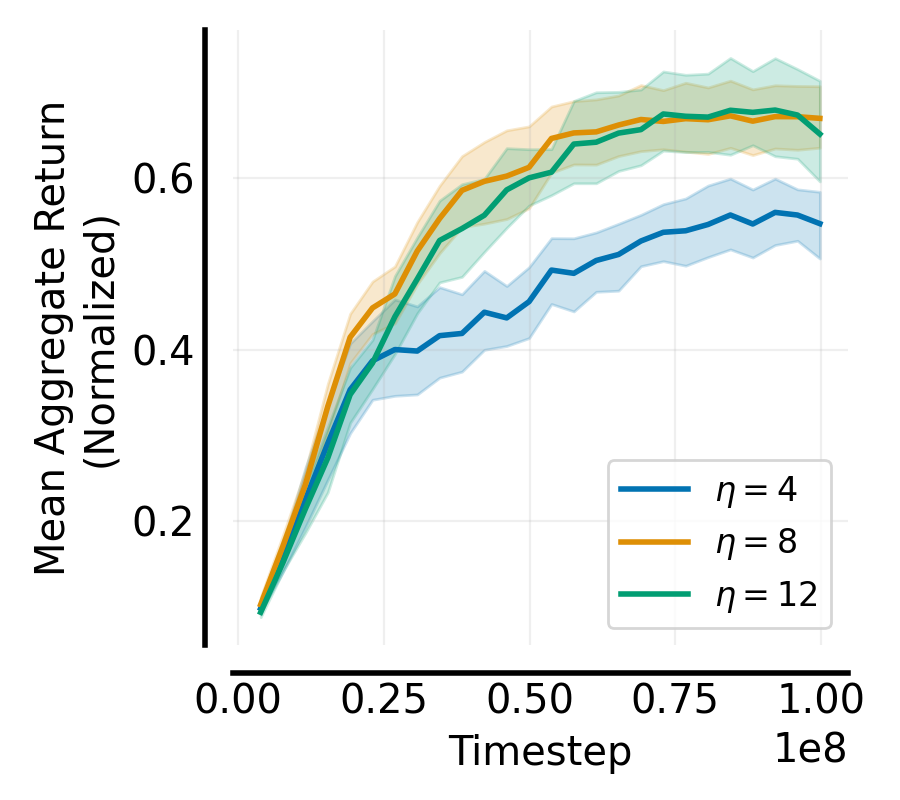}
    \caption{\textsc{Learning Potential} $\eta$ sweep.}
    \label{fig:learning_potential_eta}
    \end{subfigure}
    \hfill
    \begin{subfigure}{0.45\linewidth}
        \includegraphics[width=\linewidth]{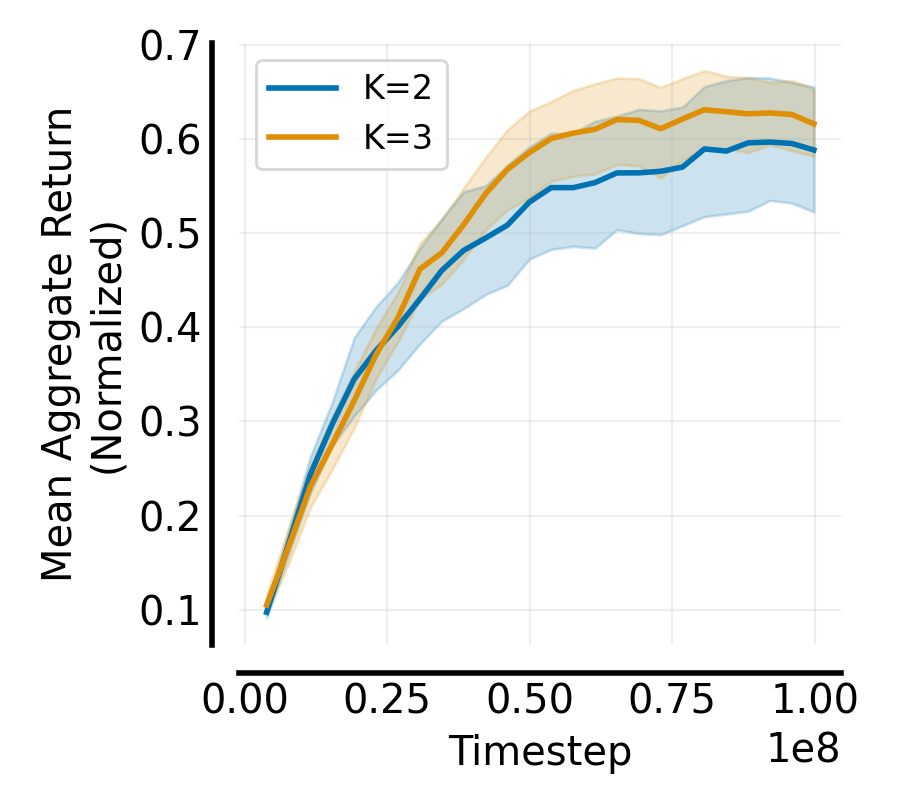}
    \caption{\textsc{Hard First} $K$ sweep.}
    \label{fig:hard_first_k}
    \end{subfigure}
    \caption{Hyperparameter sweeps in MuJoCo6. Shaded regions denote 95\% bootstrap confidence intervals.
    }
    \label{fig:mujoco_hyperparameter_sweep}
\end{figure}

In this section, we describe how we set hyperparameters for each baseline. We tune MuJoCo6 hyperparameters via parameter sweep; since MT10 and MT50 experiments are significantly more expensive, we use a single hyperparameter setting for each, following the original papers where possible. We fix the mirror ascent step size to $\alpha = 0.5\cdot\eta$ for all methods. All hyperparameters are summarized in Tables~\ref{tab:mtppo_hyperparams} and~\ref{tab:mujoco_hyperparams}.

\paragraph{DRATS.}
For MuJoCo6, we sweep $\eta \in \{4, 8, 12\}$ and report results for $\eta = 4$. For MT10 and MT50, we use $\eta = 8$ and $\eta = 3$, respectively.

\paragraph{\textsc{Hard First} (SMT,~\citet{cho2024hard}).}
For MuJoCo6, we sweep $K \in \{2, 3\}$ with $B_1 = 80\%$ and find $K=3$ is most data efficient. For MT10 and MT50, we follow \citet{cho2024hard} and use $K=3$ and $K=8$ with $B_1 = 85\%$. Thresholds are listed in Table~\ref{tab:smt_thresholds}.

\paragraph{\textsc{Learning Progress} (ALP-GMM,~\citet{portelas2020teacher}).}
For MuJoCo6, we sweep $\eta \in \{4, 8, 12\}$ and report results for $\eta = 12$. For MT10 and MT50, we use $\eta = 12$ and $\eta = 3$, respectively. 

\paragraph{\textsc{Learning Potential} (PLR,~\citet{jiang2021prioritized}).}
For MuJoCo6, we sweep $\eta \in \{4, 8, 12\}$ and report results for $\eta = 8$. For MT10 and MT50, we use $\eta = 5$. We choose $\eta$ larger than DRATS's for MT10/MT50 because our MuJoCo6 sweep showed \textsc{Learning Potential} performed best with a larger value than DRATS.


\begin{table}[t]
\centering
\begin{tabular}{ll}
\toprule
\textbf{Description} & \textbf{Value} \\
\midrule

Episode length & 500 \\
Batch size & 100,000 (200 episodes) / 500,000 (1000 episodes)  \\
Number of epochs & 16 \\
Number of mini-batches per epoch & 32 \\
Discount factor & 0.99 \\
Clip ratio & 0.2 \\
Policy entropy coefficient & 0.01 / 0.03 \\
Optimizer learning rate & 5e-4 \\
Advantage estimate tau & 0.95 \\
Value Normalization by task & False \\
Input Normalization by task & False \\
Per-task Advantage Normalization & True \\
Separate critic and policy networks & True \\
Network Architecture & Multi-head MLP with layers (400, 400, 400) \\
\midrule
\multicolumn{2}{l}{\textbf{DRATS-Specific Hyperparameters}} \\
\midrule
Inverse temperature $\eta$ & 8 / 3\\
Mirror ascent step size $\alpha$ & $0.5\cdot \eta$\\
\midrule
\multicolumn{2}{l}{\textbf{Learning Progress-Specific Hyperparameters}} \\
\midrule
Inverse temperature $\eta$ & 8 / 3\\
Mirror ascent step size $\alpha$ & $0.5\cdot \eta$\\
\midrule
\multicolumn{2}{l}{\textbf{Learning Potential-Specific Hyperparameters}} \\
\midrule
Inverse temperature $\eta$ & 8 / 3 \\
Mirror ascent step size $\alpha$ & $0.5\cdot \eta$\\
\midrule
\multicolumn{2}{l}{\textbf{Hard First-Specific Hyperparameters}} \\
\midrule
Active task set size $K$ & 3 / 8 \\
Stage 1 budget $B_1$ & 85\% of total training budget \\
Thresholds $m_i, M_i$ & See Table~\ref{tab:smt_thresholds} \\ 
\midrule
\multicolumn{2}{l}{\textbf{MOORE-Specific Hyperparameters}} \\
\midrule
MoE experts & 2 \\
MoE layers & 2 \\
MoE hidden dim & 256 \\
Activation before/after task encoding weighting & [Linear, Linear] \\
Task encoder hidden sizes & [128] \\
Task encoder bias & False \\
\midrule
\multicolumn{2}{l}{\textbf{Soft-Modularization-Specific Hyperparameters}} \\
\midrule
MoE experts & 2 \\
MoE layers & 2 \\
State encoder hidden sizes & 256 \\
Task encoder hidden sizes & 256 \\

\bottomrule
\end{tabular}
\caption{Hyperparameters used in MT10 / MT50.}
\label{tab:mtppo_hyperparams}
\end{table}

\begin{table}[t]
\centering
\begin{tabular}{ll}
\toprule
\textbf{Description} & \textbf{Value} \\
\midrule

Episode length & 1000 \\
Batch size & 48000 $\times$ 4 parallel environments  \\
Number of epochs & 16 \\
Number of mini-batches per epoch & 32 \\
Discount factor & 0.99 \\
Clip ratio & 0.2 \\
Policy entropy coefficient & 0.01  \\
Optimizer learning rate & 1e-4 \\
Advantage estimate tau & 0.95 \\
Value Normalization by task & True \\
Input Normalization by task & True \\
Per-task Advantage Normalization & True \\
Separate critic and policy networks & True \\
Network Architecture & Multi-head MLP with layers (256, 256, 256) \\
\midrule
\multicolumn{2}{l}{\textbf{DRATS-Specific Hyperparameters}} \\
\midrule
Inverse temperature $\eta$ & 4\\
Mirror ascent step size $\alpha$ & $0.5\cdot \eta$\\
\midrule
\multicolumn{2}{l}{\textbf{Learning Progress-Specific Hyperparameters}} \\
\midrule
Inverse temperature $\eta$ & 8\\
Mirror ascent step size $\alpha$ & $0.5\cdot \eta$\\
\midrule
\multicolumn{2}{l}{\textbf{Learning Potential-Specific Hyperparameters}} \\
\midrule
Inverse temperature $\eta$ & 8 \\
Mirror ascent step size $\alpha$ & $0.5\cdot \eta$\\
\midrule
\multicolumn{2}{l}{\textbf{Hard First-Specific Hyperparameters}} \\
\midrule
Active task set size $K$ & 3  \\
Stage 1 budget $B_1$ & 80\% of total training budget \\
Thresholds $m_i, M_i$ & See Table~\ref{tab:smt_thresholds} \\ 
\bottomrule
\end{tabular}
\caption{RL hyperparameters used in MuJoCo6.}
\label{tab:mujoco_hyperparams}
\end{table}

\begin{table}[t]
\centering
\begin{tabular}{lcc}
\toprule
\textbf{Task} & $m_i$ & $M_i$ \\
\midrule
Swimmer-v4      & 60   & 90   \\
Hopper-v4       & 1500 & 2500 \\
HalfCheetah-v4  & 1500 & 3000 \\
Walker2d-v4     & 1500 & 4000 \\
Ant-v4          & 1000 & 3000 \\
Humanoid-v4     & 2000 & 5000 \\
\midrule
assembly-v3 & 2000 & 3000 \\
basketball-v3 & 2000 & 2000 \\
bin-picking-v3 & 2000 & 3000 \\
box-close-v3 & 2000 & 3000 \\
button-press-topdown-v3 & 2000 & 3000 \\
button-press-topdown-wall-v3 & 2000 & 3000 \\
button-press-v3 & 2000 & 3500 \\
button-press-wall-v3 & 2000 & 3500 \\
coffee-button-v3 & 2000 & 3000 \\
coffee-pull-v3 & 2000 & 2000 \\
coffee-push-v3 & 2000 & 2000 \\
dial-turn-v3 & 2000 & 3500 \\
disassemble-v3 & 2000 & 3000 \\
door-close-v3 & 2000 & 4000 \\
door-lock-v3 & 2000 & 3000 \\
door-open-v3 & 2000 & 4000 \\
door-unlock-v3 & 2000 & 3000 \\
hand-insert-v3 & 2000 & 3500 \\
drawer-close-v3 & 2000 & 4000 \\
drawer-open-v3 & 2000 & 4000 \\
faucet-open-v3 & 2000 & 4000 \\
faucet-close-v3 & 2000 & 4000 \\
hammer-v3 & 2000 & 3000 \\
handle-press-side-v3 & 2000 & 4000 \\
handle-press-v3 & 2000 & 4000 \\
handle-pull-side-v3 & 2000 & 3000 \\
handle-pull-v3 & 2000 & 3500 \\
lever-pull-v3 & 2000 & 2500 \\
pick-place-wall-v3 & 2000 & 3000 \\
pick-out-of-hole-v3 & 2000 & 3000 \\
pick-place-v3 & 2000 & 3000 \\
plate-slide-v3 & 2000 & 4000 \\
plate-slide-side-v3 & 2000 & 3500 \\
plate-slide-back-v3 & 2000 & 4000 \\
plate-slide-back-side-v3 & 2000 & 4000 \\
peg-insert-side-v3 & 2000 & 3500 \\
peg-unplug-side-v3 & 2000 & 3500 \\
soccer-v3 & 2000 & 2000 \\
stick-push-v3 & 2000 & 2500 \\
stick-pull-v3 & 2000 & 2000 \\
push-v3 & 2000 & 3000 \\
push-wall-v3 & 2000 & 3000 \\
push-back-v3 & 2000 & 3000 \\
reach-v3 & 2000 & 4000 \\
reach-wall-v3 & 2000 & 4000 \\
shelf-place-v3 & 2000 & 3000 \\
sweep-into-v3 & 2000 & 3500 \\
sweep-v3 & 2000 & 3000 \\
window-open-v3 & 2000 & 4000 \\
window-close-v3 & 2000 & 4000 \\
\bottomrule
\end{tabular}
\caption{\textsc{Hard First} (SMT) unsolvable ($m_i$) and solved ($M_i$) thresholds for all tasks.}
\label{tab:smt_thresholds}
\end{table}

\newpage
\clearpage
\section{Computational Resources}
\label{app:compute}
A full MT10 or MT50 training run takes 20-30 hours on a single A100 GPUs with 80GB of memory. We have access to a computing cluster with many GPUs and train each agent on separate GPU and use parallel environments across 10 CPUs. MT10 runs require 12GB memory and 15GB disk; MT50 runs require 85GB memory and 15GB disk.
We run MuJoCo6 experiments on 4 CPUs only, and each training run takes approximately 40 hours.

\end{document}